\newtheorem{definition}{Definition}
\newtheorem{lemma}{Lemma}
\newtheorem{theorem}{Theorem} 
\definecolor{OliveGreen}{rgb}{0.33, 0.42, 0.18}
\begin{document}

\title{Deep Multi-Manifold Transformation Based Multivariate Time Series Fault Detection}

\author{
    Hong~Liu,~\IEEEmembership{Member,~IEEE},
    Xiuxiu~Qiu,
    Yiming~Shi,
    Miao~Xu,
    Zelin~Zang,~\IEEEmembership{Member,~IEEE}, %
    and Zhen~Lei,~\IEEEmembership{Fellow,~IEEE},
    \thanks{Hong Liu is with the School of Information and Electrical Engineering, Hangzhou City University, Hangzhou 310015, China, and the Academy of Edge Intelligence, Hangzhou City University, Hangzhou 310015, China.}%
    \thanks{Xiuxiu Qiu is with the College of Information Engineering, Zhejiang University of Technology, Hangzhou 310012, China.}%
    \thanks{Yiming Shi is with the Institute of Cyber-Systems and Control, Zhejiang University, Hangzhou 310027, China.}%
    \thanks{Miao Xu is with the Centre for Artificial Intelligence and Robotics (CAIR), HKISI-CAS, Hong Kong, China.}%
    \thanks{Zhen Lei is with the Centre for Artificial Intelligence and Robotics (CAIR), HKISI-CAS, Hong Kong, China; the State Key Laboratory of Multimodal Artificial Intelligence Systems (MAIS), Institute of Automation, Chinese Academy of Sciences (CASIA), Beijing, China; and the School of Artificial Intelligence, University of Chinese Academy of Sciences (UCAS), Beijing, China (email: zhen.lei@ia.ac.cn).}%
    \thanks{Zelin Zang is with the School of Engineering, Westlake University, Hangzhou 310015, China. and Centre for Artificial Intelligence and Robotics (CAIR), HKISI-CAS, Hong Kong, China. (Corresponding author: Zelin Zang, email: zangzelin@westlake.edu.cn)}%
}

\maketitle

\begin{abstract} 
Unsupervised fault detection in multivariate time series plays a vital role in ensuring the stable operation of complex systems. Traditional methods often assume that normal data follow a single Gaussian distribution and identify anomalies as deviations from this distribution. {\color{black} However, this simplified assumption fails to capture the diversity and structural complexity of real-world time series, which can lead to misjudgments and reduced detection performance in practical applications. To address this issue, we propose a new method that combines a neighborhood-driven data augmentation strategy with a multi-manifold representation learning framework.} By incorporating information from local neighborhoods, the augmentation module can simulate contextual variations of normal data, enhancing the model's adaptability to distributional changes. In addition, we design a structure-aware feature learning approach that encourages natural clustering of similar patterns in the feature space while maintaining sufficient distinction between different operational states. Extensive experiments on several public benchmark datasets demonstrate that our method achieves superior performance in terms of both accuracy and robustness, showing strong potential for generalization and real-world deployment.
\end{abstract}

\begin{IEEEkeywords}
    Unsupervised Soft Contrastive Learning, Fault Detection, Multivariate Time Series, Data Augmentation
\end{IEEEkeywords}

\section{Introduction}
\label{sec:intro}
{\color{black} Unsupervised fault detection~\cite{liu2025usd,huang2025graph} in multivariate time series has become increasingly vital in both academic research and practical domains, especially within industrial environments and the management of large-scale systems~\cite{wang2022multiscale,lin2022asynchronous,zhang2023integrated}.} By eliminating the need for labeled data, it enables early identification of anomalies and incipient faults~\cite{zhao2023novel,chen2021data}, which is crucial for minimizing maintenance overhead and averting potential system failures~\cite{li2020systematic}. This capability makes it an indispensable tool for real-time monitoring and predictive maintenance in scenarios where labeled samples are scarce or entirely unavailable~\cite{zhou2024label}.

\begin{figure}[t]
  \includegraphics[width=\linewidth]{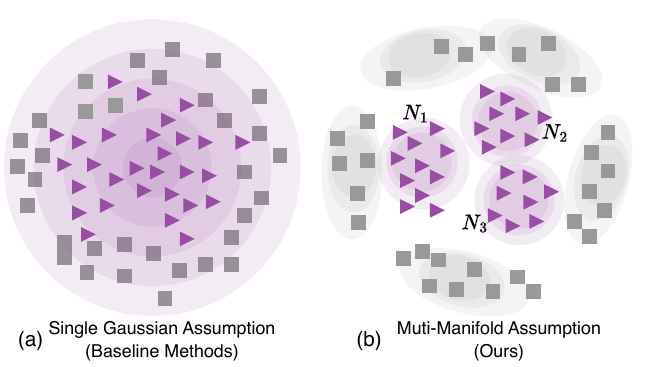} 
  \vspace{-0.2cm}
  \caption{
    {\color{black}\textbf{Motivation of DMTFD.} (a) Existing methods rely on Gaussian assumptions and fail to capture complex latent structures with multiple sub-states. (b) Our proposed DMTFD introduces multi-manifold assumption to better model state variations and improve fault detection accuracy.}}
  \label{fig_motivation}
  \vspace{-0.2cm}
\end{figure}

To address this challenge, a wide range of approaches have been explored, from classical statistical methods to modern deep learning-based models~\cite{zhang2021end,yu2023statistical}. Despite the notable progress made in recent years~\cite{li2020markov}, most existing techniques still exhibit unsatisfactory detection performance in practical applications, characterized by frequent false positives and missed anomalies (see Table~\ref{tab_my_label_roc} and Table~\ref{tab_my_label_pr}). This is largely due to the restrictive assumptions imposed during modeling, often necessitated by the absence of supervision. A prevailing strategy is to model normal behavior as a Gaussian distribution and to flag deviations as anomalies~\cite{zhou2023detecting}, as illustrated in Fig.~\ref{fig_motivation}(a). While computationally convenient, such assumptions fail to capture the intrinsic diversity and nuanced dynamics of real-world systems.

{\color{black}
In reality, both normal and anomalous behaviors often comprise multiple heterogeneous sub-patterns, each exhibiting distinct temporal and structural traits (Fig.\ref{fig_motivation}(b)). Relying on a single Gaussian model obscures these internal variations, leading to suboptimal representations and a reduced capacity to detect subtle deviations~\cite{liu2025usd}. This oversimplification not only impairs the model's ability to capture gradual state transitions but also undermines robustness in highly variable environments\cite{feng2023computation,peng2022fault}. Consequently, there is a pressing need to move beyond Gaussian-centric assumptions and adopt more expressive modeling paradigms capable of characterizing complex, multi-modal system behaviors.
}

{\color{black}
To address the limitations caused by the oversimplified Gaussian distribution assumption, we propose an unsupervised framework, termed DMTFD (Deep Multi-manifold Transformation for Fault Detection). Unlike prior methods that characterize normal states with a single Gaussian distribution~\cite{zhou2023detecting}, DMTFD adopts a multi-manifold assumption, which acknowledges the intrinsic diversity of normal operating conditions across different system states (see Fig.~\ref{fig_motivation}(b)).}

Concretely, DMTFD leverages a multi-manifold transformation module that maps multivariate time series into a latent space where heterogeneous local patterns are better disentangled and aligned. This transformation enables the model to preserve fine-grained distinctions between sub-manifolds of normal behaviors while amplifying their deviation from anomalous ones. To construct a meaningful representation space under this assumption, we introduce a neighbor-aware augmentation strategy that generates context-consistent variants of each sample~\cite{li2024genurl}, and adopt a softened similarity constraint to gently pull semantically close instances. This design alleviates the rigidity of binary contrastive signals and supports more nuanced anomaly boundary modeling~\cite{zonta2020predictive}.

Comprehensive experiments show that DMTFD significantly outperforms existing methods in both AUC and PR metrics. On benchmark datasets, DMTFD achieves consistently lower false alarm rates and higher detection accuracy. 
{\color{black}
  Furthermore, our visualization analysis supports the plausibility of the multi-Gaussian assumption and illustrates how soft contrastive learning effectively captures underlying sub-state structures.
{\textbf{(a) Multi-manifold modeling of operational states.}} We break away from the conventional single-Gaussian assumption and adopt a multi-manifold view that reflects the heterogeneity of both normal and abnormal conditions. This insight leads to a more faithful and flexible modeling of system behaviors.
{\textbf{(b) New framework for unsupervised representation.}} We propose a novel framework that integrates neighbor-based data augmentation and loss function to construct a smooth and discriminative latent space.
{\textbf{(c) Superior empirical performance across benchmarks.}} Our method achieves over 5\% improvement in detection metrics compared to existing baselines, demonstrating strong robustness and generalization in real-world industrial scenarios.
}

\section{Related Work}
\label{sec:related}
\subsection{Time Series Anomaly Detection}

In time series anomaly detection, One-Class Classification (OCC) methods—such as USAD~\cite{audibert2020usad}, and DAEMON~\cite{chen2021daemon}—typically assume access to purely normal data during training~\cite{chalapathy2019deep,hundman2018detecting}. GANF~\cite{dai2021graph} employs graph neural networks for anomaly detection in MTS. MTGFlow~\cite{zhou2023detecting} combines dynamic graphs with normalizing flows, while MTGFlow-cluster~\cite{zhou2024label} further boosts accuracy by clustering entities. {\color{black}AnomalyLLM~\cite{liu2024large} leverages large language models for knowledge distillation, using prototypical signals and synthetic anomalies to achieve a improvement on UCR datasets across 15 benchmarks. PeFAD~\cite{xu2024pefad} introduces PLM-based parameter-efficient federated learning with anomaly masking and synthetic distillation, improving performance by up to 28.74\%. Graph-MoE~\cite{huang2025graph} enhances GNN-based detection via expert mixtures and memory routers, effectively utilizing hierarchical features.
In addition, prior work USCL~\cite{liu2025usd} proposes a contrastive learning framework tailored for multivariate time series fault detection, addressing limitations of hard contrastive loss in the presence of view-level noise. Building upon this, the present work introduces a multi-manifold transformation strategy and a generalized similarity modeling approach, enabling more expressive representations and significantly improving detection performance and robustness across benchmarks.
}

\subsection{Contrastive Learning and Soft Contrastive Learning}
Soft contrastive learning~\cite{zang2023boosting,zang2022dlme} and deep manifold learning~\cite{nguyen2022deep,li2020deep} represent two advanced methodologies in the domain of machine learning, particularly in the tasks of unsupervised learning and representation learning. These techniques aim to leverage the intrinsic structure of the data to learn meaningful and discriminative features. At its core, soft contrastive learning~\cite{zang2023boosting,zang2022dlme,tang2024advancing} is an extension of the contrastive learning framework~\cite{chen2020simple,grill2020bootstrap,he2020momentum}, which aims to learn representations by bringing similar samples closer and pushing dissimilar samples apart in the representation space. However, soft contrastive learning introduces a more nuanced approach by incorporating the degrees of similarity between samples into the learning process, rather than treating similarity as a binary concept. This is achieved through the use of soft labels or continuous similarity scores, allowing the model to learn richer and more flexible representations. The softness in the approach accounts for the varying degrees of relevance or similarity among data points, making it particularly useful in tasks where the relationship between samples is not strictly binary or categorical, such as in semi-supervised learning or in scenarios with noisy labels.

\section{Preliminary}

\begin{figure*}
    \centering
    \includegraphics[width=0.99\textwidth]{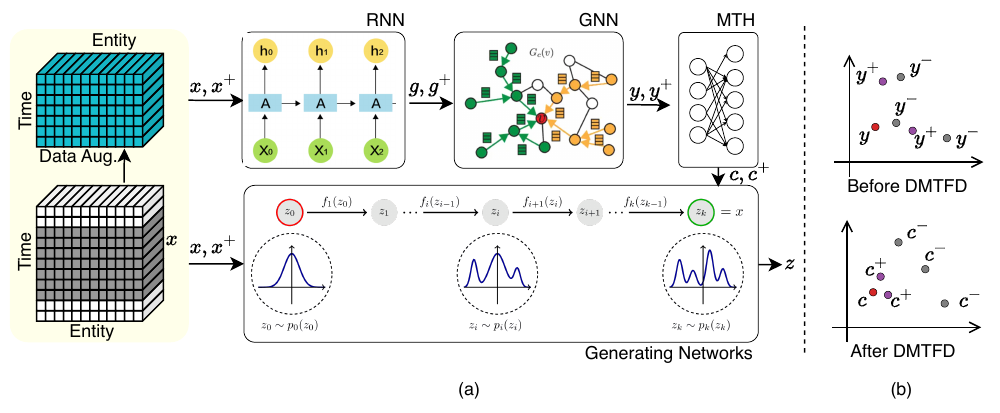}
  \vspace{-0.2cm}
    \caption{\color{black}\textbf{Overview of the proposed DMTFD framework.} (a) Temporal sequences $x, x^+$ are processed via RNN to capture temporal dynamics, followed by a GNN to model inter-entity dependencies, yielding embeddings $y$. A soft contrastive learning head maps $y$ to transformed features $c$, which are then used for density modeling via a flow-based generative network. (b) The SCL head helps cluster similar samples and separate dissimilar ones in the feature space.}
  \vspace{-0.2cm}
    \label{fig_overview}
\end{figure*}

Normalizing flow is an unsupervised density estimation approach to map the original distribution to an arbitrary target distribution by the stack of invertible affine transformations. When density estimation on original data distribution $\mathcal{X}$ is intractable, an alternative option is to estimate $z$ density on target distribution $\mathcal{Z}$. Specifically, suppose a source sample $x \in \mathcal{R}^{D} \sim \mathcal{X}$ and a target distribution sample $z \in \mathcal{R}^{D} \sim \mathcal{Z}$. Bijective invertible transformation $\mathcal{F}_{\theta}$ aims to achieve one-to-one mapping $z = f_{\theta}(x)$ from $\mathcal{X}$ to $\mathcal{Z}$. According to the change of variable formula, we can get
\begin{equation}
    P_\mathcal{X}(x) = P_\mathcal{Z}(z)\left| \det\frac{\partial{f_{\theta}}}{\partial{x}^{T}}\right|.
\end{equation}
Benefiting from the invertibility of mapping functions and tractable jacobian determinants $\left |\det\frac{\partial{f_{\theta}}}{\partial{x}^{T}}\right|$.
The objective of flow models is to achieve $\hat{z} = z$, where $\hat{z} = f_{\theta}(x)$.

Flow models are able to achieve more superior density estimation performance when additional conditions $C$ are input~\cite{ardizzone2019guided}.
Such a flow model is called conditional normalizing flow, and its corresponding mapping is derived as $z = f_{\theta}(x|C)$. Parameters $\theta$ of ${f}_{\theta}$ are updated by maximum likelihood estimation (MLE): 
\begin{equation}
    \theta^{*}=\mathop{\arg\max}\limits_{\theta}(log(P_\mathcal{Z}(f_{\theta}(x|C)) + log(\left |\det\frac{\partial{f_{\theta}}}{\partial{x}^{T}} \right|))
\end{equation}

\section{Methodology}
\label{sec:method}

\subsection{Notation and Problem Definition}
Consider a multivariate time series (MTS) data set $\mathbf{X}$. The data set encompassing $K$ entities, each with $L$ observations, and denotes as $\mathbf{X} = (\mathbf{x}_1, \mathbf{x}_2, \ldots, \mathbf{x}_K)$, where each $\mathbf{x}_k \in \mathbb{R}^{L}$. Z-score normalization is employed to standardize the time series data across different entities. A sliding window approach, with a window size of $T$ and a stride size of $S$, is utilized to sample the normalized MTS, generating training samples $\mathbf{x}^{c}$, where $c$ denotes the sampling count, and $\mathbf{x}^{c}$ represents the segment $\mathbf{x}_{[cS-T/2]:[cS+T/2]}$.

The objective of unsupervised fault detection is to identify segments $\mathbf{x}^{c}$ within $\mathbf{X}$ exhibiting anomalous behavior. This process operates under the premise that the normal behavior of $\mathbf{X}$ is known, and any significant deviation from this behavior is considered abnormal. Specifically, abnormal behavior is characterized by its occurrence in low-density regions of the normal behavior distribution, defined by a density threshold $\theta < \rho(\text{normal behavior})$. The task of unsupervised fault detection in MTS can thus be formalized as identifying segments $\mathbf{x}^{c}$ where $\rho(\mathbf{x}^{c}) < \theta$.

\begin{definition}[Supervised Fault Detection in MTS]
    Let $\mathcal{D} = \{(\mathbf{x}^c, y^c)\}_{i=1}^N$ be a labeled dataset for an MTS, where each segment $\mathbf{x}^c$ is annotated with a label $y^c \in \{0, 1\}$, indicating normal ($0$) or abnormal ($1$) behavior. Supervised fault detection aims to learn a function $f: \mathbb{R}^{L} \rightarrow \{0, 1\}$ that can accurately classify new, unseen segments $\mathbf{x}_{\text{new}}$ as normal or abnormal based on learned patterns of faults.
\end{definition}

\begin{definition}[Unsupervised Fault Detection under Gaussian Assumption]
    In the Gaussian assumption context, unsupervised fault detection in an MTS operates on the premise that the distribution of normal behavior can be modeled using a Gaussian distribution. Anomalies are identified as segments $\mathbf{x}^{c}$ that fall in regions of low probability under this Gaussian model, specifically where $\rho(\mathbf{x}^{c}) < \theta$, with $\theta$ being a predefined density threshold.
\end{definition}

\begin{definition}[Unsupervised Fault Detection under Multi-Gaussian Assumption]
    Under the multi-Gaussian assumption, unsupervised fault detection recognizes that the distribution of normal behavior in an MTS may encompass multiple modes, each fitting a Gaussian model. This assumption allows for a more nuanced detection of anomalies, which are segments $\mathbf{x}^{c}$ that do not conform to any of the modeled Gaussian distributions of normal behavior, detected through a composite density threshold criterion $\rho(\mathbf{x}^{c}) < \theta$.
\end{definition}

\subsection{Neural Network Structure of DMTFD}
The core idea of \textbf{DMTFD} is to learn precise and discriminative latent representations by integrating data augmentation with soft contrastive learning. As illustrated in Fig.~\ref{fig_overview}, the DMTFD framework jointly leverages a Recurrent Neural Network (RNN) $R(\cdot; \theta)$, a Graph Neural Network (GNN) $G(\cdot; \phi)$, and a normalizing flow model $F(\cdot; \alpha)$ to effectively model the temporal patterns, relational structures, and distributional properties of multivariate time series data for fault detection.

    {\color{black}\textbf{RNN-GNN-Flow Architecture.} To jointly capture temporal dynamics, inter-entity dependencies, and density patterns, we adopt an RNN-GNN-Flow architecture. Given the multivariate input at time window $c$, denoted as $x^c$, we first compute the temporal encoding $g^c = R(x^c; \theta)$ using a recurrent neural network parameterized by $\theta$; a graph neural network with parameters $\phi$ then produces the spatio-temporal embedding $y^c = G(g^c; \phi)$; finally, a normalizing flow with parameters $\alpha$ estimates the log-likelihood score $\ell^c = F(y^c; \alpha)$, which is used to detect anomalies—where low likelihood indicates high abnormality.
    }

    {\color{black}\textbf{Manti-manifold Transformation Head~(MTH).}  }
To enhance the discriminative quality of the learned representations, we introduce a Multi-manifold Transformation Head, implemented as a multilayer perceptron (MLP) $H(\cdot; \omega)$. This module receives the spatio-temporal embedding $y^c$ from the GNN as input and projects it into a latent feature space optimized for contrastive learning:
$
    z^c = H(y^c; \omega),
    \label{equ_fsnetH}
$
where $\omega$ denotes the learnable parameters of the MLP, and $z^c$ is the contrastive representation corresponding to time window $c$.
The goal of this module is to enforce semantic alignment between similar instances while ensuring separation among dissimilar ones. This is achieved via a soft contrastive loss, which penalizes embeddings of positive pairs (semantically similar samples) that are far apart and encourages larger distances for negative pairs (semantically dissimilar samples) that are too close. Unlike hard contrastive formulations that impose strict binary similarity, the soft version allows for graded similarity relationships, thereby providing a smoother optimization landscape and improving robustness to intra-class variability.

\subsection{Maximum Likelihood Estimation (MLE) Loss and Data Augmentation}
A key objective of the DMTFD framework is to accurately identify abnormal behaviors by modeling the underlying distribution of normal operational data. To this end, we employ \textit{Maximum Likelihood Estimation (MLE)} as the training criterion for the flow-based density estimation module. The core motivation behind this design is to encourage the model to assign high likelihoods to normal patterns while assigning low likelihoods to anomalous deviations, thus enabling precise and fine-grained fault detection.

Formally, given the contrastive representation $z^c$ for time window $c$, the normalizing flow transformation $f(\cdot; \alpha)$ maps it into a latent variable $\hat{z}^c = f(z^c; \alpha)$ that follows a known target distribution (typically standard Gaussian with mean $\mu$ and identity covariance). The MLE loss is then defined as:
\begin{equation}
    \mathcal{L}_{\text{MLE}} = \frac{1}{N} \sum_{c=1}^N \left[ -\frac{1}{2}\left\|\hat{z}^c - \mu\right\|_2^2 + \log \left|\det \left( \frac{\partial f_\alpha}{\partial z^c} \right) \right| \right],
    \label{equ_Lmle}
\end{equation}
where $N$ is the number of time windows, and $\mu$ is the mean of the target Gaussian distribution $\mathcal{Z}$. The first term encourages $\hat{z}^c$ to match the target distribution, while the second term—i.e., the log-determinant of the Jacobian—ensures that the flow transformation remains invertible and probability-preserving. By optimizing $\mathcal{L}_{\text{MLE}}$, the flow model learns to model the density landscape of normal patterns precisely. At inference time, data points with significantly low log-likelihoods under the learned distribution are flagged as anomalies, enabling accurate and interpretable fault detection across diverse temporal and relational contexts.

Data augmentation is crucial for improving the model's ability to generalize beyond the training data by artificially expanding the dataset's size and diversity~\cite{zang2023boosting_icml}. Our methodology incorporates two primary strategies: neighborhood discovery and linear interpolation.

\textbf{Neighborhood Discovery.} This technique focuses on leveraging the local structure of the data to generate new, plausible data points that conform to the existing distribution. For each data point $x^c$, we define its neighborhood $N(x^c)$ using a distance metric $d(\cdot,\cdot)$, such as the Euclidean distance. The neighborhood consists of points $x^c_\text{new}$ that meet the criterion:
\begin{equation}
    d(x^c, x^c_\text{new}) \leq \epsilon
    \label{equ_neighbor_discover}
\end{equation}
where $\epsilon$ is a predetermined threshold defining the neighborhood's radius. This method captures the local density of the data, enabling the generation of new samples within these densely populated areas and thus enhancing the dataset with variations that align with the original data distribution.

\textbf{Data Augmentation by Linear Interpolation.} Data augmentation strategy augments the dataset by creating intermediate samples between existing data points. {For two points $x^c$ and $x^c_\text{new}$, a new sample $x_\text{new}^c$ is formulated as,}
\begin{equation}
    x_\text{new}^c = \alpha x^c + (1 - \alpha) x^c_\text{new},
    \label{eq_aug_all}
\end{equation}
where $\alpha$ is a random coefficient sampled from a uniform distribution, $\alpha \sim U(0,1)$. This approach facilitates a smooth transition between data points, effectively bridging gaps in the data space and introducing a continuum of sample variations. By interpolating between points either within the same neighborhood or across different neighborhoods, we substantially enhance the dataset's diversity and coverage, providing the model with a more comprehensive set of examples for training.

\subsection{Multi-Manifold Loss with Data Augmentation}
In data-augmentation-based contrastive learning (CL), the task is framed as a binary classification problem over pairs of samples. Positive pairs, drawn from the joint distribution $(x^\text{c1}, x^{c2}) \sim P_{x^\text{c1}, x^{c2}}$, are labeled as $\mathcal{H}_{c1, c2} = 1$, whereas negative pairs, drawn from the product of marginals $(x^\text{c1}, x^{c2}) \sim P_{x^\text{c1}}P_{x^{c2}}$, are labeled as $\mathcal{H}_{ck} = 0$. { The goal of contrastive learning is to learn representations that maximize the similarity between positive pairs and minimize it between negative pairs, utilizing the InfoNCE loss.} {\color{black} The $\mathcal{L}_\text{CL}\left(x^\text{c1}, x^{c2}, \{x^\text{cn}\}_{cn=1}^{N_K}\right)=$}
\begin{equation}
    \begin{aligned}
        -\log\!\frac{\exp({{z^\text{c1}}^T\!z^\text{c2}})}{\sum_{k=1}^{N_K}\!\exp({{z^\text{c1}}^T\!z^\text{cn}})} = \!-\!\log\!\frac{\exp(S(z^\text{c1},\!z^\text{c2}))}{\sum_{k=1}^{N_K}\exp(S(z^\text{c1},\!z^\text{cn}))},
    \end{aligned}
    \label{eq_CL}
\end{equation}
where $(x^\text{c1}, x^\text{c2})$ constitutes a positive pair and $(x^\text{c1}, x^\text{cn})$ a negative pair, with $z^\text{c1}, z^\text{c2}, z^\text{cn}$ being the embeddings of $x^\text{c1}, x^\text{c2}, x^\text{cn}$ respectively, and $N_K$ representing the number of negative pairs. The similarity function $S(z^\text{c1},z^\text{c2})$ is typically defined using cosine similarity. This method effectively enhances the model's discriminative power by distinguishing between closely related (positive) and less related (negative) samples within the augmented data space.

The conventional contrastive learning (CCL) loss is structured around a single positive sample contrasted against multiple negatives. To refine this, we have restructured the CCL loss into a more nuanced form that utilizes labels for positive and negative samples, denoted by $\mathcal{H}_\text{c1,c2}$. Detailed explanations of this transformation from Eq.~(\ref{eq_CL}) to Eq.~(\ref{eq_nce2}) are available in supplementary material.A. {\color{black} The $\mathcal{L}_\text{CCL}(x^\text{c1}, \{x^\text{c2}\}_{\text{c2}=1}^{N_K}) =$  }
\begin{equation}
    \begin{aligned}
        -\sum_{j=1} \{
        \mathcal{H}_\text{c1,c2} \log Q_\text{c1,c2} + (1 - \mathcal{H}_\text{c1,c2}) \log \dot{Q}_\text{c1,c2}
        \},
        \label{eq_nce2}
    \end{aligned}
\end{equation}
where $\mathcal{H}_\text{c1,c2}$ indicates if samples $c1$ and $c2$ have been augmented from the same source. $\mathcal{H}_\text{c1,c2} = 1$ signifies a positive pair $(x^\text{c1}, x^\text{c2})$, and $\mathcal{H}_{\text{c1},\text{c2}} = 0$ indicates a negative pair. The term $Q_{\text{c1},\text{c2}} = \exp(S(z^\text{c1}, z^\text{c2}))$ represents the density ratio, as defined and computed by the backbone network.
To enhance robustness against view-level noise introduced by data augmentation, we propose the {\color{black} \textit{Multi-Manifold Loss (MML)}}. Unlike conventional contrastive losses that treat pairwise labels as binary constants, MML introduces soft similarity-based weights, enabling the model to down-weight uncertain or noisy samples during training.

Given an anchor sample $x^{c_1}$ and a set of associated samples $\{x^{c_2}\}_{c_2=1}^{N_K}$, the MML loss is defined as, $\mathcal{L}_\text{MML}(x^{c_1}, \{x^{c_2}\})=$
\begin{equation}
    -\sum_{c_2=1}^{N_K} \left[ P_{c_1,c_2} \log Q_{c_1,c_2} + (1 - P_{c_1,c_2}) \log (1 - Q_{c_1,c_2}) \right],
    \label{equ_Lmml}
\end{equation}
where $P_{c_1,c_2}$ is the soft label (weight) reflecting the similarity in input space, and $Q_{c_1,c_2}$ denotes the similarity in the contrastive latent space. They are defined as,
\begin{equation}
    \begin{aligned}
        P_{c_1,c_2} & =
        \begin{cases}
            e^{\alpha} \cdot \kappa(y^{c_1}, y^{c_2}) & \text{if } \mathcal{H}_{c_1,c_2} = 1 \\
            \kappa(y^{c_1}, y^{c_2})                  & \text{otherwise}
        \end{cases}, \quad \\
        Q_{c_1,c_2} & = \kappa(z^{c_1}, z^{c_2}),
    \end{aligned}
\end{equation}
where $\mathcal{H}_{c_1,c_2} \in \{0, 1\}$ indicates whether $x^{c_2}$ is a positive pair of $x^{c_1}$, $\alpha \in [0, 1]$ is a confidence prior that emphasizes positive pairs, and $\kappa(\cdot,\cdot)$ is a similarity kernel.

    {\color{black}
        To improve robustness against augmentation noise and sample-level variability, we adopt a generalized Gaussian kernel $\kappa^\beta(\cdot, \cdot)$ as the similarity function in MML. Compared to the standard Gaussian kernel, the generalized Gaussian provides a tunable shape parameter that allows for heavier tails when needed, enabling the model to assign meaningful similarity scores even to moderately distant pairs. The kernel is defined as:
        \begin{equation}
            \kappa^\beta(a, b) = \exp\left( -\left( \frac{\|a - b\|_2}{\sigma} \right)^\beta \right),
        \end{equation}
        where $\beta > 0$ controls the shape of the decay (with $\beta=2$ reducing to a Gaussian kernel, and $\beta < 2$ producing heavier tails), and $\sigma$ is a scale parameter. This formulation makes the loss function more tolerant to view-level noise and non-uniform sample structures, leading to smoother gradients and improved generalization under multi-manifold distributions. Unlike USCL~\cite{liu2025usd}, which relies on a fixed Gaussian assumption and sharp contrastive margins, our generalized kernel formulation offers adaptive flexibility across manifold structures and improves tolerance to cross-view variation. Detailed comparisons and analysis are provided in Supplementary Material A. }

\subsection{Optimization Objectives}
The entire DMTFD framework, encompassing the RNN, GNN, and FLOW models, is optimized jointly through Maximum Likelihood Estimation (MLE) to ensure effective anomaly detection. {The joint optimization process is formulated as follows,}
\begin{equation}
    L_{\text{DMTFD}} = \mathcal{L}_{\text{MLE}} + \mathcal{L}_{\text{MML}},
    \label{equ_losstp}
\end{equation}
where $\mathcal{L}_{\text{MLE}}$ is the MLE loss, and $\mathcal{L}_{\text{MML}}$ is the MML loss. The MLE loss is designed to maximize the likelihood of observing the transformed data points $\hat{z}_i$ under the model, while the MML loss is designed to enhance the model's sensitivity to subtle differences between normal and abnormal patterns. By embedding these mathematical formulations into each step of the DMTFD framework, we ensure a rigorous approach to modeling and detecting anomalies in multivariate time series data, setting a new standard for unsupervised fault detection in complex systems.

\section{Experiments}
\label{sec:experiments}
\subsection{Dataset Information and Implementation Details.}
{\color{black} \textbf{Datasets information.} We evaluate DMTFD on six widely used multivariate time series (MTS) fault detection datasets (Table~\ref{tab_dataset_setting}). \textbf{SWaT}~\cite{goh2016dataset} and \textbf{WADI}~\cite{ahmed2017wadi} are collected from water treatment and distribution testbeds, simulating industrial cyber-attacks with labeled anomalies. \textbf{PSM}~\cite{abdulaal2021practical} contains server metrics from eBay, used to detect system performance anomalies. \textbf{MSL}~\cite{hundman2018detecting} provides telemetry from NASA's Curiosity rover for space mission anomaly detection. \textbf{SMD}~\cite{su2019robust} is from a large-scale server farm and captures system behavior across multiple machines; we report average results and ensure test sets include anomalies via fixed random seeds. SMAP is a \textbf{NASA} dataset~\cite{hundman2018detecting} that contains data from a satellite's attitude control system, used for detecting anomalies in space missions.
All datasets are standard in one-class classification (OCC) for time series anomaly detection.
}

\textbf{Datasets split and preprocessing.}
{In our experimental setup, we adhere to the dataset configurations 
used in the GANF study~\cite{dai2021graph}. MTGFlow~\cite{zhou2023detecting} and MTGFlow\_Cluster~\cite{zhou2024label}.} Specifically, for the SWaT dataset, we partition the original testing data into 60\% for training, 20\% for validation, and 20\% for testing. For the other datasets, the training partition comprises 60\% of the data, while the test partition contains the remaining 40\%. The training data is used to train the model, while the validation data is used to tune hyperparameters and the test data is used to evaluate the model's performance. The datasets are preprocessed to remove missing values and normalize the data to a range of [0, 1]. The data is then divided into fixed-length sequences, with a window size of 60 and a stride of 10. The window size determines the number of time steps in each sequence, while the stride determines the step size between each sequence. The data is then fed into the model for training and evaluation.

\begin{table*}
    \centering
    \small
    \caption{\textbf{Fault detection performance of Area Under the Receiver Operating Characteristic~(AUROC) on five public datasets.} { We compare the performance of different methods using a consistent window size (DMTFD) with our optimal results obtained through varying window sizes (DMTFD*) to showcase the maximum potential. The best results is in \textbf{bold}.}}

    \resizebox{\textwidth}{!}{
    \begin{tabular}{l|c||cccccc|c|c}
        \toprule
        \multirow{2}{*}{Methods}                                                               & \multirow{2}{*}{Journal-Year} & \multicolumn{6}{c|}{Area Under the Receiver Operating Characteristic Curve (AUROC)} & \multirow{2}{*}{Average}       & \multirow{2}{*}{Rank}                                                                                                                                                         \\
                                                                                               &                               & SWaT                                                                                & WADI                           & PSM                            & MSL                            & SMD                            & \color{black} SMAP                        &                  &               \\
        \midrule
        DROCC \cite{goyal2020drocc}                                                            & ICML, 2020                    & 72.6(\textpm{3.8})                                                                  & 75.6(\textpm{1.6})             & 74.3(\textpm{2.0})             & 53.4(\textpm{1.6})             & 76.7(\textpm{8.7})             & \color{black} 70.1(\textpm{3.5})          & 70.4             & 9             \\
        DeepSAD \cite{ruff2019deep}                                                            & ICLR, 2020                    & 75.4(\textpm{2.4})                                                                  & 85.4(\textpm{2.7})             & 73.2(\textpm{3.3})             & 61.6(\textpm{0.6})             & 85.9(\textpm{11.1})            & \color{black} 72.8(\textpm{4.1})          & 77.4             & 8             \\
        USAD \cite{audibert2020usad}                                                           & KDD, 2020                     & 78.8(\textpm{1.0})                                                                  & 86.1(\textpm{0.9})             & 78.0(\textpm{0.2})             & 57.0(\textpm{0.1})             & 86.9(\textpm{11.7})            & \color{black} 74.5(\textpm{2.9})          & 76.9             & 7             \\
        GANF \cite{dai2021graph}                                                               & ICLR, 2022                    & 79.8(\textpm{0.7})                                                                  & 90.3(\textpm{1.0})             & 81.8(\textpm{1.5})             & 64.5(\textpm{1.9})             & 89.2(\textpm{7.8})             & \color{black} 77.6(\textpm{3.2})          & 80.5             & 6             \\
        \color{black} Autoformer \cite{wu2022autoformerdecompositiontransformersautocorrelation} & \color{black} NeurIPS, 2022     & \color{black} 81.2(\textpm{1.1})                                                      & \color{black} 89.5(\textpm{1.3}) & \color{black} 83.7(\textpm{1.8}) & \color{black} 66.0(\textpm{1.5}) & \color{black} 89.8(\textpm{7.1}) & \color{black} 78.3(\textpm{3.1})          & \color{black} 81.4 & \color{black} 5 \\
        MTGFlow \cite{zhou2023detecting}                                                       & AAAI, 2023                    & 84.8(\textpm{1.5})                                                                  & 91.9(\textpm{1.1})             & 85.7(\textpm{1.5})             & 67.2(\textpm{1.7})             & 91.3(\textpm{7.6})             & \color{black} 80.2(\textpm{2.8})          & 83.5             & 3             \\
        MTGFlow\_C \cite{zhou2024label}                                                        & TKDE, 2024                    & 83.1(\textpm{1.3})                                                                  & 91.8(\textpm{0.4})             & 87.1(\textpm{2.4})             & 68.2(\textpm{2.6})             & 91.6(\textpm{7.3})             & \color{black} 81.1(\textpm{2.7})          & 84.8             & 2             \\
        \color{black} USD \cite{liu2025usd}                                                     & \color{black} ICASSP, 2025      & \color{black} 82.7(\textpm{1.0})                                                      & \color{black} 90.5(\textpm{1.2}) & \color{black} 84.5(\textpm{1.3}) & \color{black} 66.8(\textpm{2.0}) & \color{black} 90.5(\textpm{6.8}) & \color{black} 79.5(\textpm{3.2})          & \color{black} 82.4 & \color{black} 4 \\
        \midrule
        DMTFD                                                                                  & OURS                          & 90.5(\textpm{0.9})                                                                  & 94.3(\textpm{0.4})             & \textbf{89.2(\textpm{2.0})}    & 75.0(\textpm{2.2})             & \textbf{95.0(\textpm{6.1})}    & \color{black} 85.7(\textpm{2.5})          & 88.3             & 1             \\
        DMTFD*                                                                                 & OURS                          & \textbf{92.5(\textpm{1.2})}                                                         & \textbf{95.2(\textpm{1.1})}    & \textbf{89.2(\textpm{2.1})}    & \textbf{76.5(\textpm{1.0})}    & \textbf{95.0(\textpm{6.4})}    & \color{black} \textbf{87.4(\textpm{2.1})} & \textbf{89.3}    & 1*            \\
        \bottomrule
    \end{tabular}
    }
    \label{tab_my_label_roc}
\end{table*}

\begin{table*}
    \centering
    \small
    \caption{\textbf{Anomaly detection performance of Precision-Recall(PR) on five public datasets}. { We compare the performance of different methods using a consistent window size (DMTFD) with our optimal results obtained through varying window sizes (DMTFD*) to showcase the maximum potential. The best results is in \textbf{bold}.}}
    \resizebox{\textwidth}{!}{
    \begin{tabular}{l|c||cccccc|c|c}
        \toprule
        \multirow{2}{*}{Methods}                                                               & \multirow{2}{*}{Journal-Year} & \multicolumn{6}{c|}{Area Under the Precision-Recall Curve (AUPRC)} & \multirow{2}{*}{Average}        & \multirow{2}{*}{Rank}                                                                                                                                                              \\
                                                                                               &                               & SWaT                                                               & WADI                            & PSM                             & MSL                             & SMD                              & \color{black} SMAP                         &                  &               \\
        \midrule
        DROCC \cite{goyal2020drocc}                                                            & ICML, 2020                    & 26.4 (\textpm{9.8})                                                & 16.7 (\textpm{7.1})             & 60.7 (\textpm{11.4})            & 13.2 (\textpm{0.9})             & 20.7 (\textpm{15.6})             & \color{black} 31.2 (\textpm{5.3})          & 28.2             & 9             \\
        DeepSAD \cite{ruff2019deep}                                                            & ICLR, 2020                    & 45.7 (\textpm{12.3})                                               & 23.1 (\textpm{6.3})             & 66.7 (\textpm{10.8})            & 26.3 (\textpm{1.7})             & 61.8 (\textpm{21.4})             & \color{black} 45.3 (\textpm{8.4})          & 44.8             & 7             \\
        USAD \cite{audibert2020usad}                                                           & KDD, 2020                     & 18.8 (\textpm{0.6})                                                & 19.8 (\textpm{0.5})             & 57.9 (\textpm{3.6})             & 31.3 (\textpm{0.0})             & 68.4 (\textpm{19.9})             & \color{black} 42.5 (\textpm{9.5})          & 39.8             & 8             \\
        GANF \cite{dai2021graph}                                                               & ICLR, 2022                    & 21.6 (\textpm{1.8})                                                & 39.0 (\textpm{3.1})             & 73.8 (\textpm{4.7})             & 31.1 (\textpm{0.2})             & 64.4 (\textpm{21.4})             & \color{black} 50.7 (\textpm{4.3})          & 46.8             & 6             \\
        \color{black} Autoformer \cite{wu2022autoformerdecompositiontransformersautocorrelation} & \color{black} NeurIPS, 2022     & \color{black} 35.1 (\textpm{7.2})                                    & \color{black} 39.3 (\textpm{3.9}) & \color{black} 71.4 (\textpm{6.0}) & \color{black} 29.5 (\textpm{2.5}) & \color{black} 60.2 (\textpm{18.1}) & \color{black} 48.2 (\textpm{8.6})          & \color{black} 47.3 & \color{black} 5 \\
        MTGFlow \cite{zhou2023detecting}                                                       & AAAI, 2023                    & 38.6 (\textpm{6.1})                                                & 42.2 (\textpm{4.9})             & 76.2 (\textpm{4.8})             & 31.1 (\textpm{2.6})             & 64.2 (\textpm{21.5})             & \color{black} 52.6 (\textpm{6.3})          & 50.8             & 3             \\
        MTGFlow\_C \cite{zhou2024label}                                                        & TKDE, 2024                    & 41.2 (\textpm{4.8})                                                & 42.4 (\textpm{5.2})             & 76.1 (\textpm{4.9})             & 31.2 (\textpm{2.8})             & 64.5 (\textpm{20.1})             & \color{black} 53.1 (\textpm{8.9})         & 51.4             & 2             \\
        \color{black} USCL \cite{liu2025usd}                                                     & \color{black} ICASSP, 2025      & \color{black} 37.4 (\textpm{5.9})                                    & \color{black} 40.5 (\textpm{4.1}) & \color{black} 74.2 (\textpm{5.6}) & \color{black} 30.7 (\textpm{2.4}) & \color{black} 63.3 (\textpm{18.4}) & \color{black} 51.8 (\textpm{6.1})          & \color{black} 49.6 & \color{black} 4 \\
        \midrule
        DMTFD                                                                                  & OURS                          & 53.6 (\textpm{10.1})                                               & 45.7 (\textpm{4.1})             & 83.2 (\textpm{4.9})             & 32.1 (\textpm{3.5})             & 73.8 (\textpm{16.7})             & \color{black} 60.2 (\textpm{7.2})          & 58.1             & 1             \\
        DMTFD*                                                                                 & OURS                          & \textbf{59.3} (\textpm{6.5})                                       & \textbf{60.5} (\textpm{3.2})    & \textbf{83.2} (\textpm{3.9})    & \textbf{46.2} (\textpm{1.8})    & \textbf{73.8} (\textpm{13.7})    & \color{black} \textbf{65.3} (\textpm{8.9}) & \textbf{64.7}    & \textbf{1*}   \\
        \bottomrule
    \end{tabular}
    }
    \label{tab_my_label_pr}
\end{table*}

\begin{figure*}[t]
    \centering
    \includegraphics[width=0.95\textwidth]{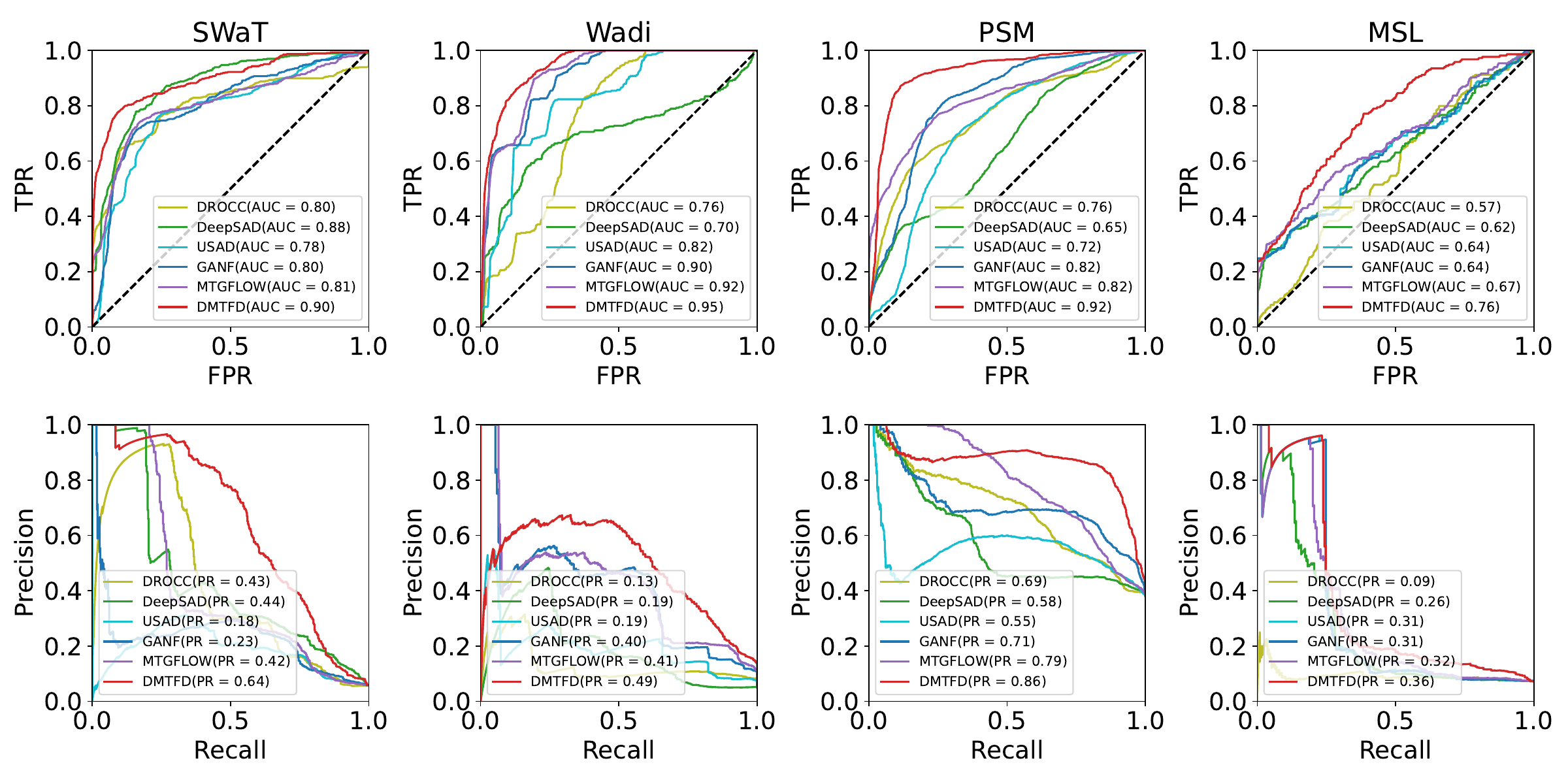}
    \vspace{-0.2cm}
    \caption{\textbf{ROC and PR plots of four datasets.} The ROC and PR curves of the SWaT, WADI, PSM, and MSL datasets are shown. The ROC curves illustrate the trade-off between true positive rate and false positive rate, while the PR curves show the trade-off between precision and recall. The performance of our method is compared against state-of-the-art methods, demonstrating the effectiveness of our approach.}
    \label{fig_ROC_PR}
\end{figure*}

\textbf{Implementation details.}
For all datasets, we set the window size to 60-80 and the stride size to 10-30.  All experiments were run for 400-600 epochs and executed using PyTorch 2.2.1 on an NVIDIA RTX 3090 24GB GPU. Additional specific parameters can be found in Supplementary Material.

\subsection{Evaluation Metric and Baselines Methods.}
Following prior work, DMTFD performs window-level anomaly detection, where a window is labeled anomalous if it contains any anomalous point. We evaluate performance using two metrics: \textbf{AUROC}, which measures overall discriminative ability across thresholds, and \textbf{AUPRC}, which is more informative under class imbalance, capturing the trade-off between precision and recall.

\textbf{Baselines.} We compare DMTFD with several state-of-the-art (SOTA) anomaly detection methods. \textit{DROCC}~\cite{goyal2020drocc} is a robust one-class classification method assuming locally linear manifolds to avoid representation collapse. \textit{DeepSAD}~\cite{ruff2019deep} is a semi-supervised method that detects anomalies via entropy differences in latent distributions. \textit{USAD}~\cite{audibert2020usad} employs adversarially trained autoencoders for unsupervised time-series anomaly detection. \textit{GANF}~\cite{dai2021graph} enhances normalizing flows using Bayesian networks to model inter-series dependencies. \textit{MTGFlow}~\cite{zhou2023detecting} and \textit{MTGFlow Cluster}~\cite{zhou2024label} utilize dynamic graph structure learning and entity-aware flows for fine-grained density estimation, with the latter introducing clustering for improved accuracy. 

\subsection{Fault Detection Performance on Five Benchmark Dataset}

First, we discuss the performance advantages of the DMTFD method by comparing results on five common benchmarks. We list the results for the AUROC and AUPRC metrics in Tables~\ref{tab_my_label_roc} and ~\ref{tab_my_label_pr}, where the numbers in parentheses represent the variance from five different seed experiments. 
{
     In Table \ref{tab_my_label_roc} and Table \ref{tab_my_label_pr}, the results of DROCC, DeepSAD, USAD, DAGMM, GANF, and MTGFlow are from the paper~\cite{zhou2023detecting}\footnote{https://github.com/zqhang/MTGFLOW}. The results of MTGFlow Cluster are from \cite{zhou2024label}\footnote{https://github.com/zqhang/MTGFLOW\_Cluster}.
}
We report on two variants of DMTFD: one (DMTFD) uses the same hyperparameters across all experiments, and the other (DMTFD*) involves hyperparameter tuning specific to each target dataset. The best results are highlighted in bold. To visualize the benefits of DMTFD performance, the ROC and PR curves on four dataset is showned in Fig.~\ref{fig_ROC_PR}. In the ROC and PR curves, the DMTFD method consistently outperforms the baseline methods, demonstrating its robustness and effectiveness in anomaly detection tasks. 

\textbf{Performance improvement.} In terms of performance improvements, as shown in Tables~\ref{tab_my_label_roc} and Tables~\ref{tab_my_label_pr}, the DMTFD method shows significant enhancements compared to state-of-the-art methods. { We compare the performance of different methods using a consistent window size (DMTFD) with our optimal results obtained through varying window sizes (DMTFD*) to showcase the maximum potential. The best results is in \textbf{bold}.}  These improvements are evident across both PR and AUC curves, particularly on the SWAT dataset with a 5.4\% increase in AUROC and a 7.7\% increase in AUPRC, and on the MSL dataset with a 6\% increase in AUROC and a 7.0\% increase in AUPRC. On average, there is a 4.1\% increase in AUROC performance and a 7.0\% increase in AUPRC performance. The consistent improvements in both AUROC and AUPRC indicate that the DMTFD method has enhanced performance across multiple aspects. 

\textbf{Stability improvement.} As shown in Tables~\ref{tab_my_label_roc} and ~\ref{tab_my_label_pr} and Fig.~\ref{fig_ROC_PR}, the stability of the DMTFD method is also noteworthy, exhibiting lower variance across all test datasets compared to traditional methods. This indicates that the DMTFD method is more robust and less sensitive to random initialization, making it a reliable choice for anomaly detection tasks.

\textbf{Performance potential.} Furthermore, by comparing the results of DMTFD and DMTFD*, we find that conducting hyperparameter searches for each dataset can significantly boost model performance. This suggests that our method has good hyperparameter adaptability and considerable potential for further performance enhancements.

\begin{figure*}[t]
    \centering
    \includegraphics[width=1\textwidth]{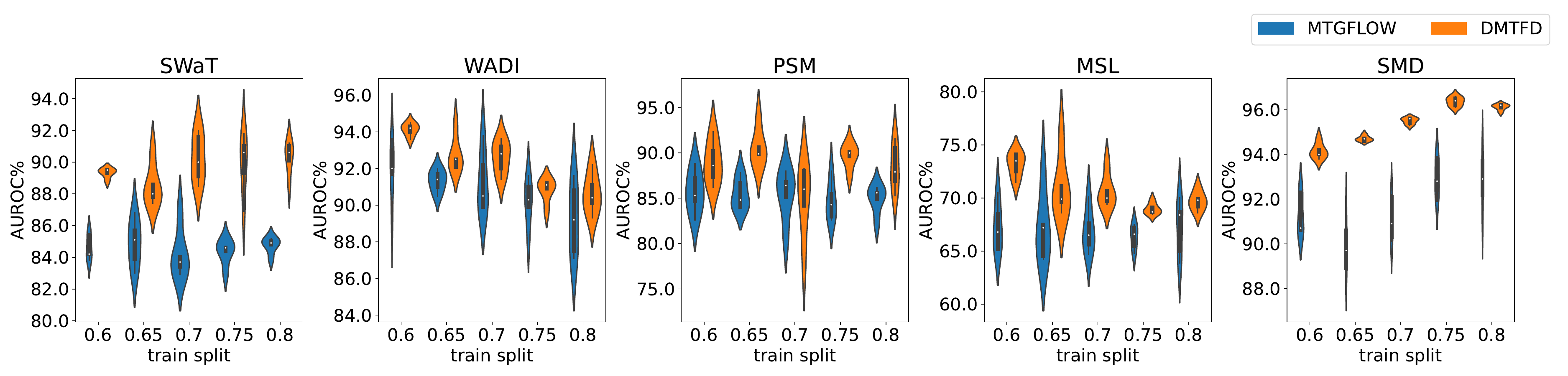}
    \vspace{-0.3cm}
    \caption{\textbf{Violin plot of AUROC scores on different training splits.} The x-axis represents the training split ratio, and the y-axis represents the AUROC scores. The DMTFD method consistently outperforms the baseline methods(MTGFLOW) across different training splits, demonstrating its robustness and effectiveness in anomaly detection tasks.}
    \label{fig_anomaly_ratio}
\end{figure*}
\textbf{Additional results on difference train/validation splits.}
To further investigate the influence of anomaly contamination rates, we vary training splits to adjust anomalous contamination rates. For all the above-mentioned datasets, the training split increases from 60\% to 80\% with 5\% stride. We present an average result over five runs in Fig.~\ref{fig_anomaly_ratio}. Although the anomaly contamination ratio of training dataset rises, the anomaly detection performance of MTGFlow remains at a stable high level. This indicates that the proposed DMTFD method is robust to the anomaly contamination ratio of the training dataset.

\subsection{Visualization of multi-subclass data distributions}

\begin{figure*}[t]
    \centering
    \includegraphics[width=1\textwidth]{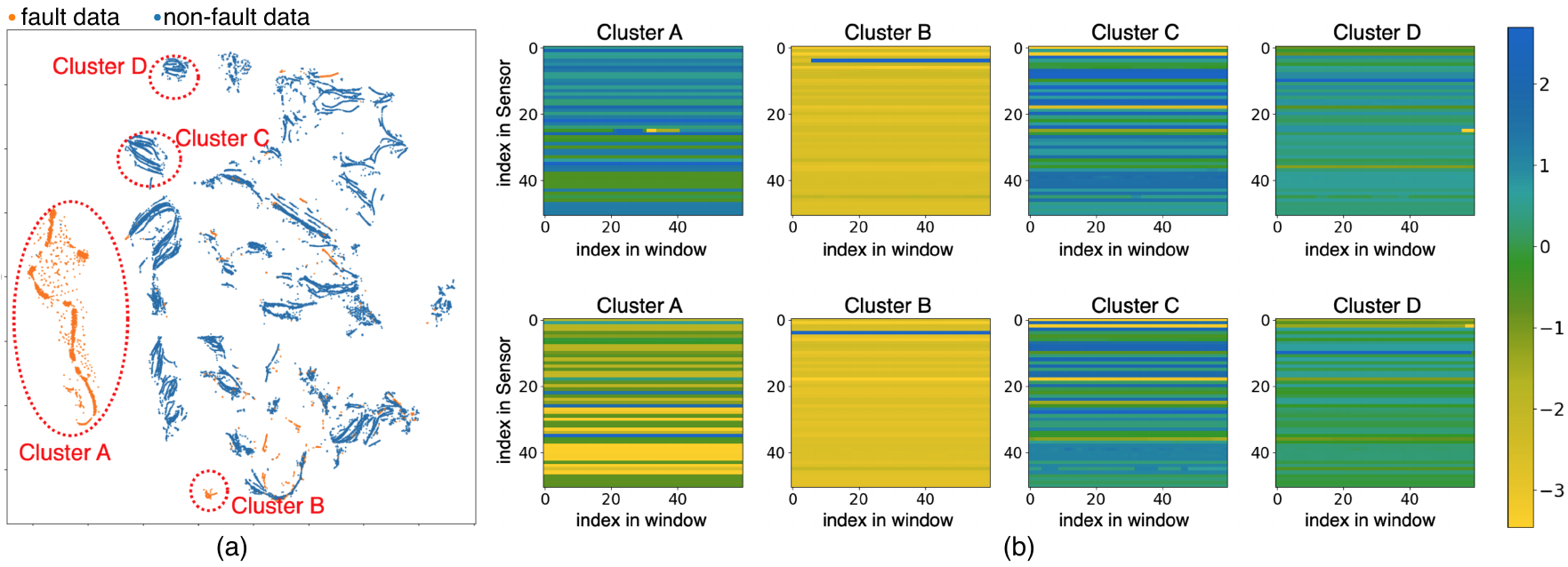}
    \vspace{-0.3cm}
    \caption{\textbf{Visualization of multi-subclass data distributions.} (a) The t-SNE visualization of embedding of DMTFD on SWAT datasets. { The color shown the different states of the data.}
    Multiple clusters can be seen in the representation, which is consistent with the multiple manifold assumption of this paper. The normal and abnormal states are not a single cluster, but can be divided into separate subclusters. (b) The heatmaps of individual data point, each row in the heatmap is the index number of the time and each column is the index number of the sensor. We selected four sub-clusters, and two samples from each cluster were randomly selected for presentation. The similarity of samples within sub-clusters and the difference of samples between sub-clusters are illustrated.}
    \label{fig_vis_cluster}
\end{figure*}

\begin{figure}
    \centering
    \includegraphics[width=1\columnwidth]{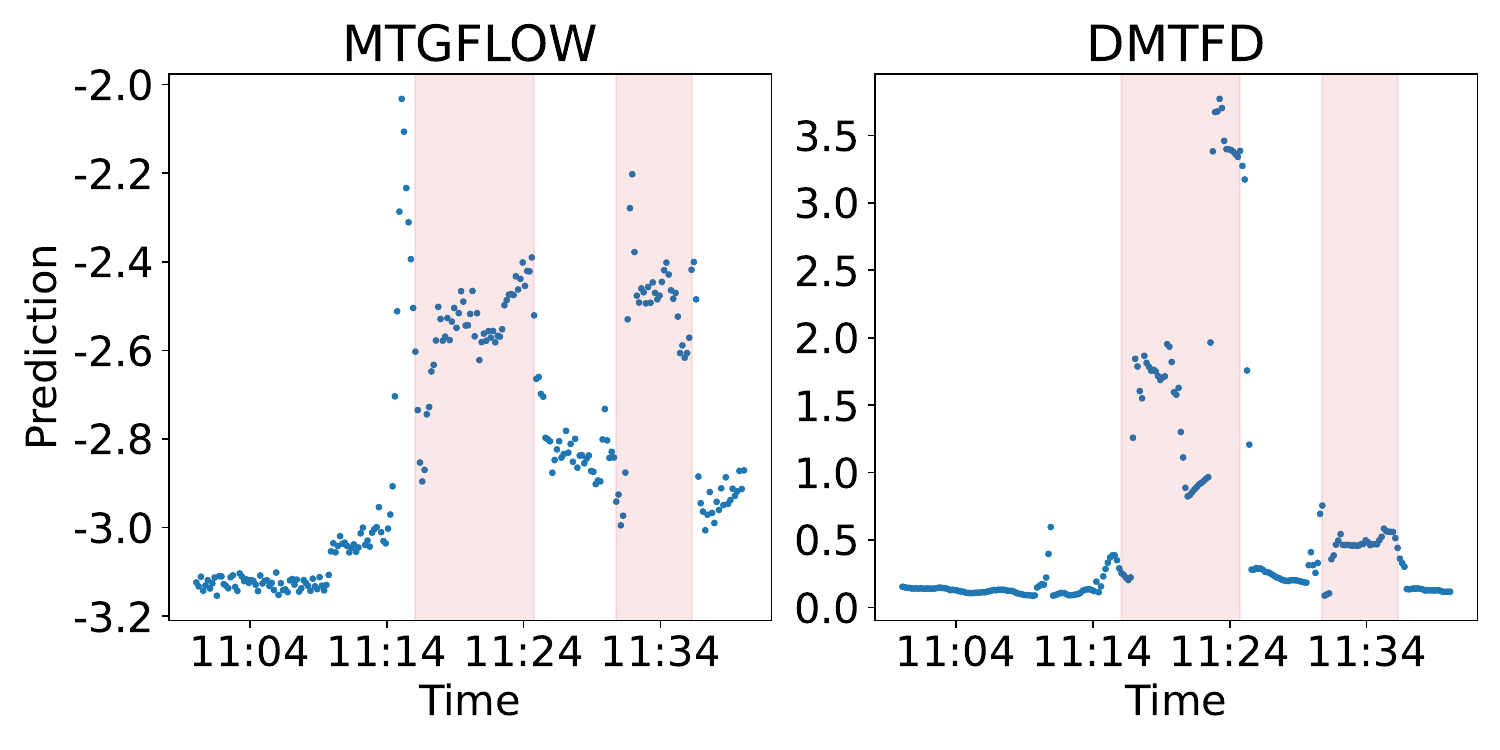}
    \vspace{-0.3cm}
    \caption{\textbf{Point plot of the anomalies predict outputs of MTGFLOW and proposed DMTFD.} The x-axis represents the anomaly index, and the y-axis represents the log-likelihood of the anomaly. Anomalous ground truths are marked by a red background. This indicates that DMTFD is able to detect anomalies more accurately and in a timely manner.}
    \label{fig_LogLikelihoods}
\end{figure}

To further investigate the effectiveness of the DMTFD method, we visualize the multi-subclass data distributions on the SWAT dataset. As shown in Fig.~\ref{fig_vis_cluster}, the t-SNE visualization of the embedding of DMTFD on SWAT datasets reveals multiple clusters, consistent with the multiple manifold assumption of this paper. The normal and abnormal states are not a single cluster but can be divided into separate subclusters. The heatmaps of individual data points further illustrate the similarity of samples within sub-clusters and the differences between samples in different sub-clusters. This visualization demonstrates the ability of the DMTFD method to capture the diverse patterns present in both normal and abnormal states, enhancing its anomaly detection capabilities.

In addition, we present the point plot of the anomalies predict outputs of MTGFLOW and the proposed DMTFD in Fig.~\ref{fig_LogLikelihoods}. The x-axis represents the anomaly index, and the y-axis represents the log-likelihood of the anomaly. Anomalous ground truths are marked by a red background. This indicates that DMTFD is able to detect anomalies more accurately and in a timely manner, outperforming MTGFLOW in terms of anomaly detection performance.

Fig.~\ref{fig_comparison_anomaly_scores} shows the bar plot of the frequency of normalized anomaly scores on baseline methods and DMTFD. The x-axis represents the anomaly scores, and the y-axis represents the frequency of each score. The normalized anomaly scores of DMTFD are significantly lower than those of GANF and MTGFlow, indicating that DMTFD is more effective at distinguishing between normal and abnormal states.

\begin{figure*}
    \centering
    \includegraphics[width=1.9\columnwidth]{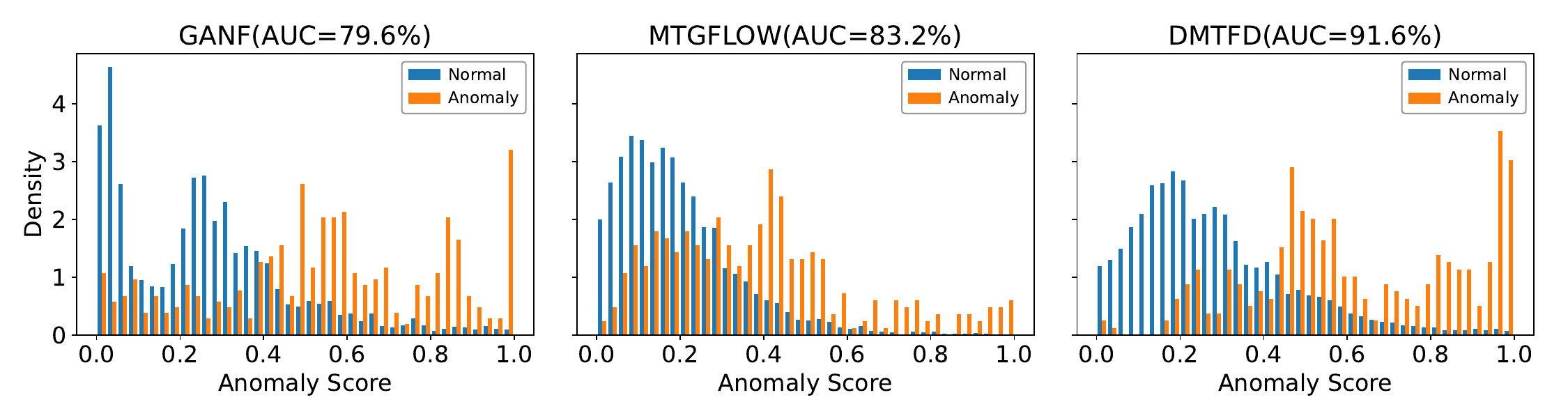}
    \vspace{-0.3cm}
    \caption{\textbf{The bar plot of the frequency of normalized anomaly scores on baseline methods and DMTFD.} The x-axis represents the anomaly scores, and the y-axis represents the frequency of each score. The normalized anomaly scores of DMTFD are significantly lower than those of GANF and MTGFlow, indicating that DMTFD is more effective at distinguishing between normal and abnormal states.}
    \label{fig_comparison_anomaly_scores}
\end{figure*}

\subsection{Ablation Study: The Effect of SoftCL Loss.}

\begin{table}[t]
    \caption{\textbf{Ablation study.} AUROC comparison on four datasets~(SWaT, WADI, PSM, and MSL). The best results are in bold. The results are averaged over 5 runs, and the standard deviation is shown in parentheses.}
    \centering
    \small
    \begin{tabular}{c|cccccc|c|cccccc|c}
        \toprule
        {Method}                                                                                            & \multirow{1}{*}{SWaT}       & \multirow{1}{*}{WADI }      & \multirow{1}{*}{PSM}        & \multirow{1}{*}{MSL}        \\
        \midrule

        \textit{DMTFD}                                                                                       & \textbf{90.5(\textpm{0.9})} & \textbf{94.3(\textpm{0.4})} & \textbf{89.2(\textpm{2.4})} & \textbf{75.0(\textpm{2.0})} \\
        \textit{w. Gaussian}                                                                                & 88.9(\textpm{1.1})          & 93.9(\textpm{1.7})          & 86.7(\textpm{2.5})          & 72.1(\textpm{0.7})          \\
        \textit{w/o. MML}                                                                                & 84.3(\textpm{1.5})          & 91.7(\textpm{1.2})          & 85.5(\textpm{1.3})          & 70.7(\textpm{2.4})          \\
        \textit{w/o. CL}                                                                                    & 84.7(\textpm{1.3})          & 90.8(\textpm{1.4})          & 85.9(\textpm{1.6})          & 69.2(\textpm{2.1})          \\
        \multicolumn{1}{c|}{\begin{tabular}[c]{@{}c@{}}\textit{w/o. Aug}\\ \textit{(MTGFlow)}\end{tabular}} & 84.8(\textpm{1.5})          & 91.9(\textpm{1.1})          & 85.7(\textpm{1.5})          & 67.2(\textpm{1.7})          \\
        \toprule
    \end{tabular}
    \label{tb_ablation}
\end{table}

To assess the effectiveness of each component designed in our model, we conducted a series of ablation experiments. The results of these experiments are presented in Table~\ref{tb_ablation}.

We performed controlled experiments to verify the necessity of the Soft Contrastive Learning (SoftCL) Loss in the UNDA settings across all four datasets. The performance of the proposed DMTFD method is denoted as {\textit{DMTFD}} in the table. The variant {\textit{w/o. SoftCL}} represents the model performance with the SoftCL loss component, $\mathcal{L}_{MML}(\mathbf{x}_{i}^{t}, {y_{i}}^{t})$, removed from the overall loss function of DMTFD. The variant {\textit{w. CL}} indicates the model performance when the SoftCL loss is replaced by a typical Contrastive Loss (CL), $\mathcal{L}_\text{CCL}$, as defined in Eq.~(\ref{eq_CL}). 
The results clearly demonstrate that the SoftCL Loss significantly outperforms the traditional CL loss. We attribute the inferior performance of $\mathcal{L}_\text{CCL}$ to its inability to adequately address the view-noise caused by domain bias.

\subsection{Hyperparameter Robustness}

\textbf{Hyperparameter Robustness: Window Size and Number of Blocks.} 
The Table.~\ref{tab_ablation_hyperparameters_window_blocks} presents an ablation study investigating the robustness of hyperparameters, focusing on the window size and number of blocks. Three distinct datasets, SWaT, WADI, and PSM, along with MSL, are evaluated using DMTFD. Our method showcases promising results, with consistently competitive AUROC scores across varying hyperparameter configurations. Notably, the standard deviations accompanying AUROC values indicate a high degree of stability, underscoring the reliability of our approach. Analysis of the table reveals that optimal configurations often coincide with larger window sizes and moderate block numbers, suggesting a preference for capturing broader temporal contexts while maintaining computational efficiency. Additionally, trends indicate that as the number of blocks increases, there's a discernible improvement in performance, albeit with diminishing returns beyond a certain threshold. This observation highlights the importance of carefully balancing model complexity with computational resources. Overall, the DMTFD method demonstrates robustness and effectiveness in anomaly detection tasks.

\textbf{Hyperparameter Robustness: Learning rate, $\nu$, and number of epoch.}
In order to further investigate the effectiveness of MTGFlow, we give a detailed analysis based on SWaT dataset. We conduct ablation studies on the learning rate, $\nu$, and the number of epochs. {The results are shown in Table.~\ref{tab_ablation_lr}, Table.~\ref{tab_abalation_v_laten}, Table.~\ref{tab_abalation_epoch} and Table.~\ref{tab_ablation_topological_loss}. The results show that the performance of MTGFlow is relatively stable across different learning rates, $\nu$, and the number of epochs. This indicates that MTGFlow is robust to hyperparameter changes and can achieve good performance with a wide range of hyperparameters.}

\begin{table*}
   \centering
   \small
   \caption{\textbf{Parameters analysis: learning rates.} Table of AUC scores (AUROC\%) for different learning rates on the SWaT dataset, indicating DMTFD's sensitivity to the learning rate.}
    \vspace{-0.2cm}
   \begin{tabular}{c|cccccc}
      \toprule
      lr    & 0.001              & 0.005              & 0.01               & 0.05               & 0.1                & 0.5                \\
      \midrule
      AUROC & 84.5(\textpm{4.2}) & \textbf{90.9(\textpm{0.9})} & 90.2(\textpm{0.9}) & 82.3(\textpm{1.5}) & 82.4(\textpm{2.7}) & 79.2(\textpm{4.1}) \\
      \bottomrule
   \end{tabular}
   \label{tab_ablation_lr}
\end{table*}

\begin{table*}
   \centering
   \small
   \caption{\textbf{Parameters analysis: number of epochs.}  Table of AUC scores (AUROC\%) for different epochs on the SWaT dataset, indicating DMTFD's sensitivity to the number of epochs.}
    \vspace{-0.2cm}
   \begin{tabular}{c|cccccc}
      \toprule
      epoch & 40                 & 100                & 200                & 400                & 500                & 1000               \\
      \midrule
      AUROC & 81.4(\textpm{2.9}) & 85.8(\textpm{2.9}) & 89.8(\textpm{1.4}) & \textbf{90.2(\textpm{0.9})} & \textbf{90.2(\textpm{0.9})} & \textbf{90.2(\textpm{0.9})} \\
      \bottomrule
   \end{tabular}
   \label{tab_abalation_epoch}
\end{table*}

\begin{table*}
   \centering
   \small
   \caption{\textbf{Parameters analysis: hyperparameter $\nu$.}  Table of (AUROC\%) scores (AUROC\%) for different $\nu$ on the SWaT dataset, indicating DMTFD's sensitivity to the hyperparameter $\nu$.}
    \vspace{-0.2cm}
   \begin{tabular}{c|cccc}
      \toprule
      $\nu$ & 0.005              & 0.01               & 0.05               & 0.1                \\
      \midrule
      AUROC & 88.4(\textpm{1.8}) & 90.2(\textpm{0.9}) & 90.5(\textpm{1.3}) & \textbf{90.6(\textpm{2.6})} \\
      \bottomrule
   \end{tabular}
   \label{tab_abalation_v_laten}
\end{table*}

\begin{table*}
   \centering
   \small
   \caption{\textbf{Parameters analysis: wight of negative~(Ne) sample and positive~(Po) sample in Eq.(\ref{equ_Lscl})}. Table of AUC scores (AUROC\%) for different topological loss ratios on the SwaT dataset.}
    \vspace{-0.2cm}
   \begin{tabular}{c|cccccc}
      \toprule
      \multicolumn{1}{c|}{\diagbox{ne}{po}} & 0.5                & 1                  & 2                  & 3                  & 4                  & 5                  \\\midrule
      1                                     & 91.2(\textpm{1.0}) & 90.2(\textpm{3.1}) & 91.3(\textpm{1.7}) & 89.8(\textpm{1.8}) & 90.9(\textpm{1.1}) & 89.8(\textpm{1.8}) \\\midrule
      2                                     & 89.0(\textpm{2.0}) & 90.0(\textpm{2.3}) & 90.1(\textpm{0.7}) & 90.6(\textpm{1.6}) & 90.6(\textpm{2.3}) & 91.2(\textpm{1.9}) \\\midrule
      3                                     & 89.8(\textpm{2.8}) & 90.3(\textpm{2.5}) & 91.2(\textpm{1.7}) & 88.6(\textpm{2.8}) & 91.3(\textpm{0.9}) & 90.1(\textpm{1.8}) \\\midrule
      4                                     & 89.9(\textpm{1.1}) & 89.1(\textpm{2.6}) & 90.3(\textpm{1.0}) & 88.7(\textpm{2.5}) & 90.1(\textpm{0.8}) & 90.4(\textpm{2.7}) \\\midrule
      5                                     & 89.2(\textpm{3.2}) & 90.2(\textpm{0.9}) & 92.1(\textpm{1.1}) & 89.0(\textpm{2.2}) & 90.2(\textpm{0.8}) & 90.3(\textpm{1.1}) \\\midrule
      6                                     & 89.3(\textpm{0.9}) & 90.9(\textpm{1.4}) & 89.4(\textpm{1.2}) & 90.0(\textpm{1.3}) & 89.5(\textpm{2.2}) & 90.4(\textpm{1.9}) \\
      \bottomrule
   \end{tabular}
   \label{tab_ablation_topological_loss}
\end{table*}

\section{Conclusion}
\label{sec:conclusion}

    In this work, we propose DMTFD, an unsupervised framework for fault detection in multivariate time series. To overcome the limitations of Gaussian assumptions, DMTFD models data with multi-Gaussian representations. A neighbor-based augmentation strategy is designed to generate positive pairs for learning, while a representation head enables  optimization based on local streamform similarities. This enhances the separation between normal and anomalous states and broadens anomaly coverage. Experiments on standard benchmarks (e.g., SWaT, WADI) demonstrate that DMTFD achieves superior AUC and PR scores, with lower false alarm rates and improved detection accuracy. Visualizations also validate the effectiveness of the multi-Gaussian modeling. {
    \color{black}Although our current focus is unsupervised fault detection, DMTFD can be extended to Remaining Useful Life (RUL) prediction. Its pretrained embeddings provide strong priors for temporal degradation modeling, and with minimal finetuning, can support few-shot RUL forecasting. We leave multi-task extensions integrating fault detection and RUL estimation for future work.}

\section*{Acknowledgments}
This work was supported by `Pioneer' and `Leading Goose' R\&D Program of Zhejiang (2024C01140), Key Research and Development Program of Hangzhou (2023SZD0073), Beijing Natural Science Foundation (L221013), and InnoHK program.

\bibliographystyle{ieee_fullname}
\bibliography{lib_zzl}

\section*{Brief biography of each author}

\begin{IEEEbiography}
[{\includegraphics[width=1in,height=1.0in,clip,keepaspectratio]{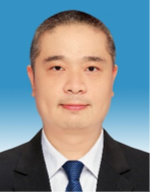}}]{Hong Liu}
    Hong Liu received an Ph.D. from Zhejiang University. He had a postdoctoral experience in electronic science major, Zhejiang University. He was a visiting researcher with The Hong Kong University of Science and Technology. He is currently an associate professor in the School of Information and Electric Engineering, Hangzhou City University. His major research interests include system modeling, optimization and control.
\end{IEEEbiography}
\vspace{-12mm}

\begin{IEEEbiography}
[{\includegraphics[width=1in,height=1.0in,clip,keepaspectratio]{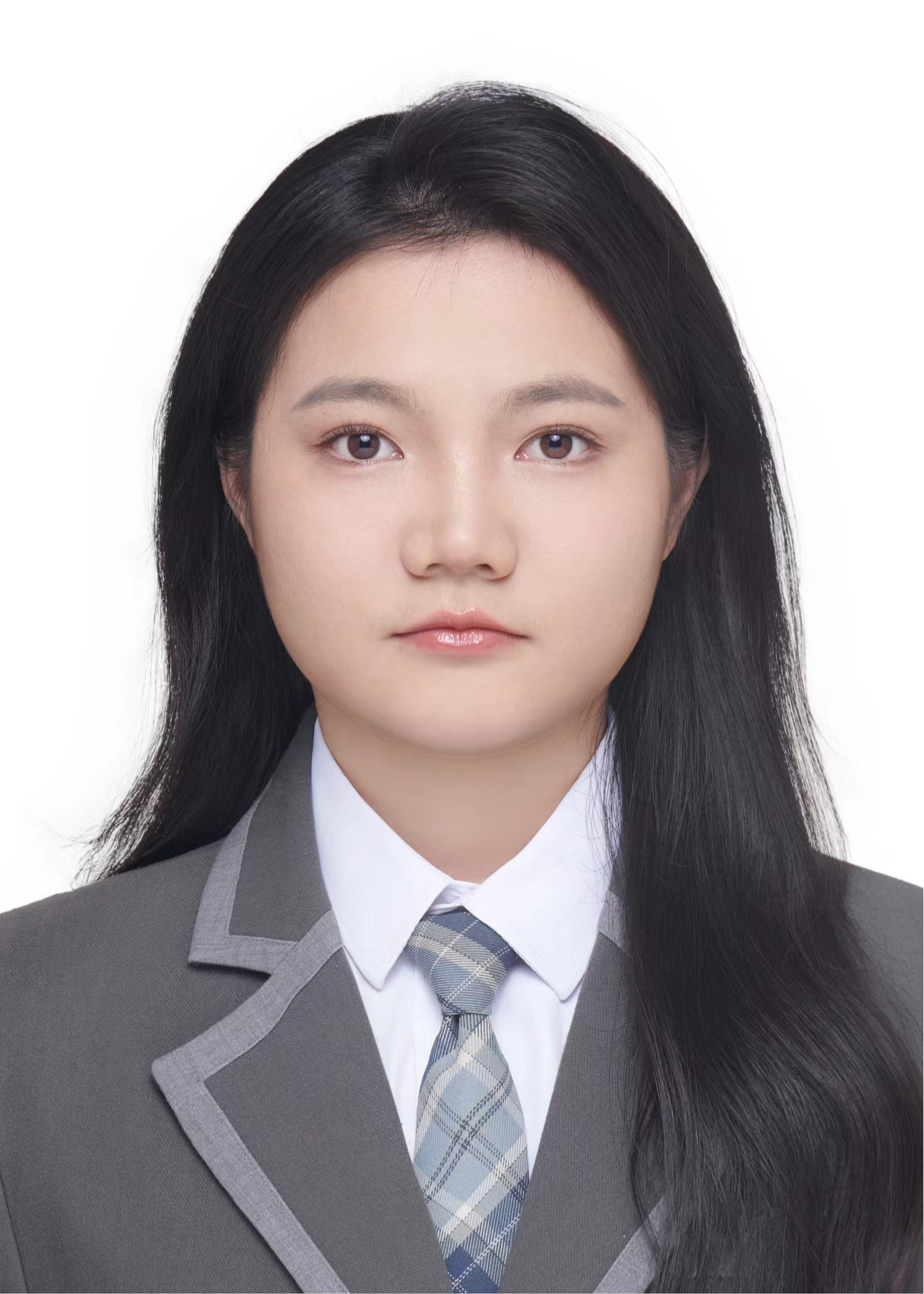}}]{Xiuxiu Qiu}
   Qiu Xiuxiu holds a master's degree in Electronic Information from Zhejiang University of Technology. Her research focuses on affective computing and intelligent information processing.
\end{IEEEbiography}
\vspace{-12mm}

\begin{IEEEbiography}
[{\includegraphics[width=1in,height=1.0in,clip,keepaspectratio]{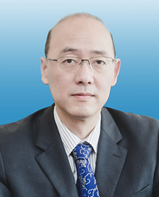}}]{Yiming Shi}
    Yiming Shi holds a Master's degree from Zhejiang University. With long-term dedication to industrial control systems research, he specializes in fieldbus technologies, information security, and edge intelligence.
\end{IEEEbiography}
\vspace{-12mm}

\begin{IEEEbiography}[{\includegraphics[width=1in,height=1.0in,clip,keepaspectratio]{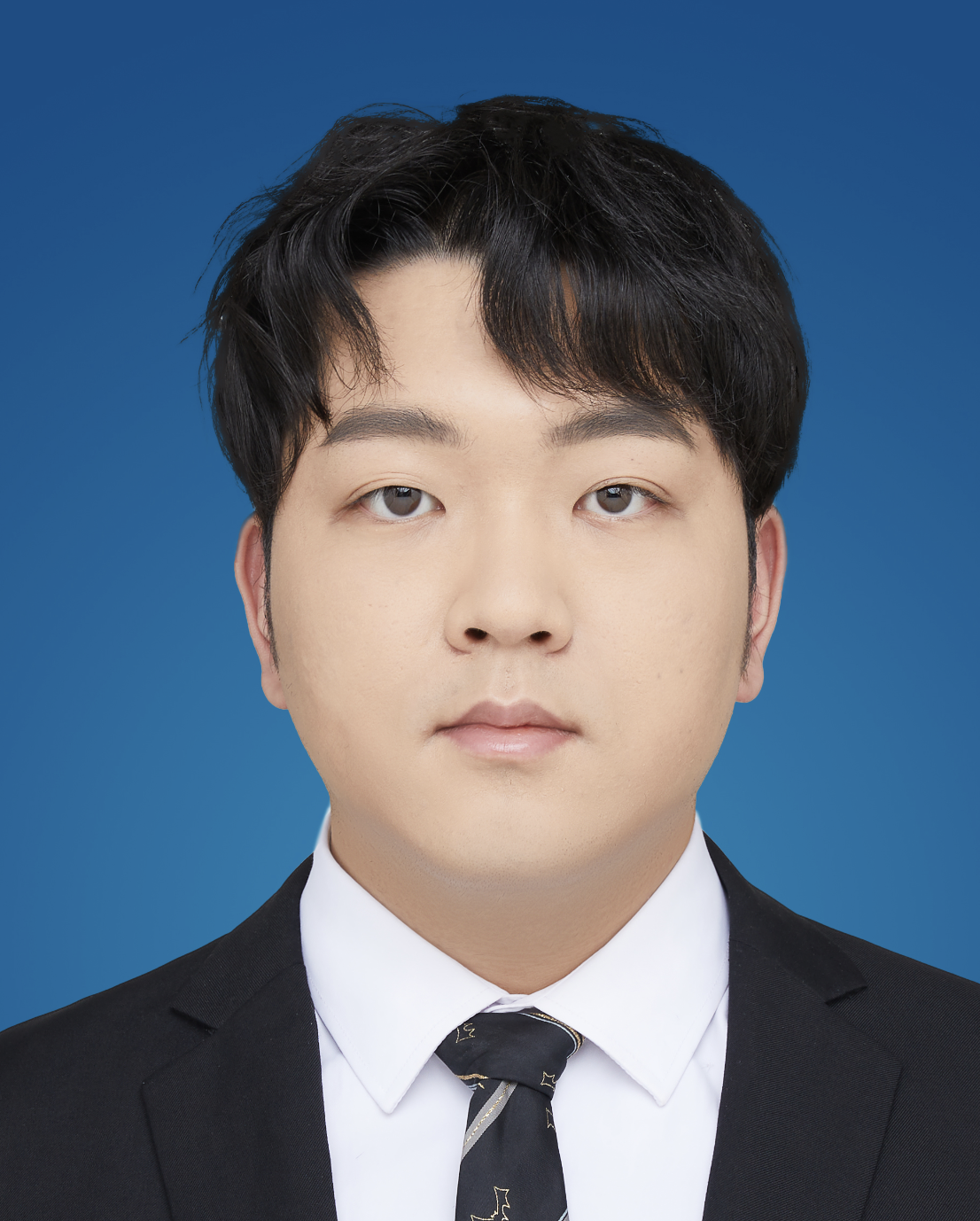}}]{Miao Xu}
    Miao Xu received the M.S. degree from the Institute of Automation, Chinese Academy of Sciences, under the supervision of Zhen Lei. His research interests include computer vision and 3D reconstruction. 
\end{IEEEbiography}

\begin{IEEEbiography}[{\includegraphics[width=1in,height=1.0in,clip,keepaspectratio]{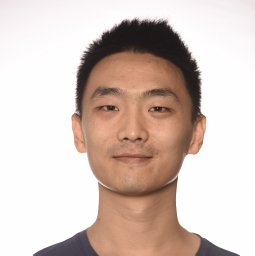}}]{Zelin Zang}
   Zelin Zang received his Ph.D. degree from Zhejiang University, China, under the supervision of Prof. Stan Z. Li. 
    His research focuses on manifold learning, dimensionality reduction, and geometric deep learning, with applications in single-cell omics, protein function analysis, and biomedical image understanding. He has made notable contributions to deep manifold transformation techniques and has proposed several high-impact models in AI for Science, including MuST for spatial transcriptomics integration and DMT-HI for interpretable manifold visualization. His current research interests lie in developing interpretable and structure-aware representation learning methods for high-dimensional biological data, with an emphasis on tree-like structure inference, large-scale biomedical foundation models, and multi-agent medical reasoning systems.
\end{IEEEbiography}
\vspace{-12mm}

\begin{IEEEbiography}[{\includegraphics[width=1in,height=1.0in,clip,keepaspectratio]{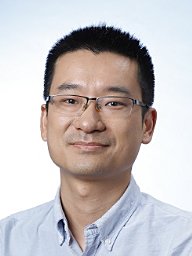}}]{Zhen Lei}
    Zhen Lei (Fellow, IEEE) received the B.S. degree in automation from the University of Science and Technology of China, in 2005, and the Ph.D. degree from the Institute of Automation, Chinese Academy of Sciences, in 2010. He is currently a Professor with the Institute of Automation, Chinese Academy of Sciences. He has published over 200 papers in international journals and conferences with more than 30000 citations in Google Scholar and an H-index of 81. His research interests are in computer vision, pattern recognition, image processing, and face recognition in particular. He is an IAPR Fellow and an AAIA Fellow. He was a winner of the 2019 IAPR Young Biometrics Investigator Award. He was the Program Co-Chair of IJCB2023, the Competition Co-Chair of IJCB2022, and the area chair of several conferences. He is an Associate Editor of IEEE Transactions on Information Forensics and Security; IEEE Transactions on Biometrics, Behavior, and Identity Science; Pattern Recognition; Neurocomputing; and IET Computer Vision.
\end{IEEEbiography}
\vspace{-12mm}

\clearpage
\onecolumn
\appendix

{\section*{A. Details of Dataset}} \label{app_dataset}
We selected the five most commonly used datasets for fault detection to evaluate the effectiveness of our method~{(in Table \ref{tab_dataset_setting})}. These datasets are widely recognized in the field of One-Class Classification (OCC) for Multivariate Time Series (MTS) anomaly detection. The datasets include:

\begin{itemize}
    \item SWaT (Secure Water Treatment)~\cite{goh2016dataset}: Originating from a scaled-down version of an industrial water treatment plant, the SWaT dataset is used for research in cybersecurity and anomaly detection in critical infrastructure systems. SWaT collects 51 sensor data from a real-world industrial water treatment plant, at the frequency of one second. The dataset provides ground truths of 41 attacks launched during 4 days. \footnote[1]{https://itrust.sutd.edu.sg/itrust-labs\_datasets/dataset\_info}.
    \item WADI (Water Distribution)~\cite{ahmed2017wadi}: The WADI dataset is collected from a water distribution testbed, simulating real-world water supply networks. It is particularly useful for studying the impact of cyber-attacks on public utility systems. WADI collects 121 sensor and actuators data from WADI testbed, at the frequency of one second. The dataset provides ground truths of 15 attacks launched during 2 days. \footnote[2]{https://itrust.sutd.edu.sg/itrust-labs\_datasets/dataset\_info/}.
    \item PSM (Pooled Server Metrics)~\cite{abdulaal2021practical}: This dataset aggregates performance metrics from multiple server nodes managed by eBay. It is utilized to identify outliers indicating potential issues in server performance or security breaches \footnote[3]{https://github.com/tuananhphamds/MST-VAE}.
    \item MSL (Mars Science Laboratory rover)~\cite{hundman2018detecting}: This dataset originates from the Mars Science Laboratory rover, specifically the Curiosity rover. It includes telemetry data used to detect anomalies in the rover's operational parameters during its mission on Mars \footnote[4]{https://github.com/khundman/telemanom}.
    \item SMD (Server Machine Dataset)~\cite{su2019robust}: Collected from a large internet company's server farms, this dataset comprises metrics from different server machines, providing a basis for detecting unusual behaviors in server operations \footnote[5]{https://github.com/NetManAIOps/OmniAnomaly}. { For this dataset, the average results across all entities are reported. For consistency and proper evaluation, we adjusted the random seed during dataset splitting to ensure anomalies are present in the test set while maintaining a training/testing ratio. }
\end{itemize}

\begin{table*}[h]
    \footnotesize
    \centering
    \caption{\textbf{Dataset information.} The number of metrics, training size, test size, training anomaly ratio, and test anomaly ratio of each dataset.}
    \begin{tabular}{l|cccccc}
        \toprule
        Property                    & SWaT    & WADI    & PSM    & MSL    & SMD      \\
        \midrule
        \# of Metrics               & 51      & 123     & 25     & 55     & 38       \\
        Training Size               & 269,951 & 103,680 & 52,704 & 44,237 & 425,052  \\
        Test Size                   & 89,984  & 69,121  & 35,137 & 29,492 & 283,368  \\
        Training \& Test Size       & 359,935 & 172,801 & 87,841 & 73,729 & 708,420  \\
        Training Anomaly Ratio (\%) & 17.7    & 6.4     & 23.1   & 14.7   & 4.2      \\
        Test Anomaly Ratio (\%)     & 5.2     & 4.6     & 34.6   & 4.3    & 4.1      \\
        \bottomrule
    \end{tabular}
    \label{tab_dataset_setting}
\end{table*}

{\section*{A. Details of Baselines}} \label{app_baseline}
\textbf{Baselines.} {We compare our method, DMTFD, against state-of-the-art (SOTA) methods.

These methods include:
\textit{DROCC}~\cite{goyal2020drocc}: A robust one-class classification method applicable to most standard domains without requiring additional information. It assumes that points from the class of interest lie on a well-sampled, locally linear low-dimensional manifold, effectively avoiding the issue of representation collapse

\textit{DeepSAD}~\cite{ruff2019deep}: A general semi-supervised deep anomaly detection method that leverages a small number of labeled normal and anomalous samples during training. This method is based on an information-theoretic framework that detects anomalies by comparing the entropy of the latent distributions between normal and anomalous data.

\textit{USAD}~\cite{audibert2020usad}: An unsupervised anomaly detection method for multivariate time series based on adversarially trained autoencoders. This method is capable of fast and stable anomaly detection, leveraging its autoencoder architecture for unsupervised learning and adversarial training to isolate anomalies efficiently.

\textit{GANF}~\cite{dai2021graph}: An unsupervised anomaly detection method for multiple time series. GANF improves normalizing flow models by incorporating a Bayesian network among the constituent series, enabling high-quality density estimation and effective anomaly detection.

\textit{MTGFlow}~\cite{zhou2023detecting}: An unsupervised anomaly detection approach for multivariate time series. MTGFlow leverages dynamic graph structure learning and entity-aware normalizing flows to capture the mutual and dynamic relations among entities and provide fine-grained density estimation

\textit{MTGFlow Cluster}~\cite{zhou2024label}: An unsupervised anomaly detection approach for multivariate time series. MTGFlow leverages dynamic graph structure learning and entity-aware normalizing flows to estimate the density of training samples and identify anomalous instances. Additionally, a clustering strategy is employed to enhance density estimation accuracy.

}
We evaluate the performance degradation of semi-supervised methods when dealing with contaminated training sets and compare our DMTFD with SOTA unsupervised density estimation methods.

{\section*{A. Details of SoftCL loss}} \label{app_proof_1}

\subsection*{A.1 Details of the transformation from Eq.~(\ref{eq_CL}) to Eq.~(\ref{eq_nce2})}

We start with $L_\text{CL} =  - \log \frac{ \exp(S(z_i, z_j))}{\sum_{k=1}^{N_K} \exp(S(z_i, z_k))}$ (Eq.~(\ref{eq_CL})), then

$$ L_\text{CL} = \log N_K - \log \frac{ \exp(S(z_i, z_j))}{\frac{1}{N_K}\sum_{k=1}^{N_K} \exp(S(z_i, z_k))}. $$

We are only concerned with the second term that has the gradient. Let $(i,j)$ are positive pair and $(i,k_1), \cdots, (i,k_N) $ are negative pairs. The overall loss associated with point $i$ is:
\begin{equation*}
  \begin{aligned}
      & - \log \frac
    {\exp(S(z_i, z_j))}
    { \frac{1}{N_K} \sum_{k=1}^{N_K} \exp(S(z_i, z_k))}                                                          \\
    = & - \left[
      \log \exp(S(z_i, z_j)) - \log
    { \frac{1}{N_K} \sum_{k=1}^{N_K} \exp(S(z_i, z_k))} \right]                                                  \\
    = & - \left[
      \log \exp(S(z_i, z_j)) - \sum_{k=1}^{N_K} \log \exp(S(z_i, z_{k})) +
    \sum_{k=1}^{N_K} \log \exp(S(z_i, z_{k})) - \log { \frac{1}{N_K} \sum_{k=1}^{N_K} \exp(S(z_i, z_k))} \right] \\
    = & - \left[
      \log \exp(S(z_i, z_j)) - \sum_{k=1}^{N_K} \log \exp(S(z_i, z_{k})) +
    \log \Pi_{k=1}^{N_K}  \exp(S(z_i, z_{k})) - \log { \frac{1}{N_K} \sum_{k=1}^{N_K} \exp(S(z_i, z_k))} \right] \\
    = & - \left[
      \log \exp(S(z_i, z_j)) - \sum_{k=1}^{N_K} \log \exp(S(z_i, z_{k})) +
    \log \frac {\Pi_{k=1}^{N_K}  \exp(S(z_i, z_{k}))}{ \frac{1}{N_K} \sum_{k=1}^{N_K} \exp(S(z_i, z_k))} \right] \\
  \end{aligned}
\end{equation*}

We focus on the case where the similarity is normalized, $S(z_i, z_k) \in [0,1]$. The data $i$ and data $k$ is the negative samples, then $S(z_i, z_k)$ is near to $0$, $\exp(S(z_i, z_{k}))$ is near to $1$, thus the $\frac {\Pi_{k=1}^{N_K} \exp(S(z_i, z_{k}))}{ \frac{1}{N} \sum_{k=1}^{N_K} \exp(S(z_i, z_k))}$ is near to 1, and $\log \frac {\Pi_{k=1}^{N_K}  \exp(S(z_i, z_{k}))}{ \frac{1}{N} \sum_{k=1}^{N_K} \exp(S(z_i, z_k))}$ near to 0. We have

\begin{equation*}
  \begin{aligned}
    L_\text{CL}
     & \approx  - \left[
    \log \exp(S(z_i, z_j)) - \sum_{k=1}^{N_K} \log \exp(S(z_i, z_{k})) \right] \\
  \end{aligned}
\end{equation*}

We denote $ij$ and $ik$ by a uniform index and use $\mathcal{H}_{ij}$ to denote the homology relation of $ij$.

\begin{equation*}
  \begin{aligned}
    L_\text{CL}
     & \approx - \left[
    \log \exp(S(z_i, z_j)) - \sum_{k=1}^{N_K} \log \exp(S(z_i, z_{k})) \right]                                       \\
     & \approx - \left[
    \mathcal{H}_{ij} \log \exp(S(z_i, z_j)) - \sum_{j=1}^{N_K} (1-\mathcal{H}_{ij}) \log \exp(S(z_i, z_{j})) \right] \\
     & \approx - \left[
      \sum_{j=1}^{N_K+1} \left\{  \mathcal{H}_{ij} \log \exp(S(z_i, z_j)) +  (1-\mathcal{H}_{ij}) \log \{\exp(-S(z_i, z_{j}))\}
      \right\}
    \right]                                                                                                          \\
  \end{aligned}
\end{equation*}

we define the similarity of data $i$ and data $j$ as $Q_{ij} = \exp(S(z_i, z_j))$ and the dissimilarity of data $i$ and data $j$ as $\dot{Q}_{ij} =  \exp(-S(z_i, z_j))$.

\begin{equation*}
  \begin{aligned}
    L_\text{CL} \approx - \left[
      \sum_{j=1}^{N_K+1} \left\{  \mathcal{H}_{ij} \log Q_{ij} +  (1-\mathcal{H}_{ij}) \log \dot{Q}_{ij}
      \right\}
    \right] \\
  \end{aligned}
\end{equation*}

\subsection*{A.2 The proposed SoftCL loss is a smoother CL loss}

This proof tries to indicate that the proposed SoftCL loss is a smoother CL loss. We discuss the differences by comparing the two losses to prove this point. 
the forward propagation of the network is,
${z}_{i}=H(\hat{z}_{i}), \hat{z}_{i} =F(x_{i})$, 
${z}_{j}=H(\hat{z}_{j}), \hat{z}_{j} =F(x_{j})$.
We found that we mix $y$ and $\hat{z}$ in the main text, and we will correct this in the new version. So, in this section 
${z}_{i}=H(y_{i}), y_{i} =F(x_{i})$, 
${z}_{j}=H(y_{j}), y_{j} =F(x_{j})$ is also correct.

Let $H(\cdot)$ satisfy $K$-Lipschitz continuity, then
$
  d^z_{ij} = k^* d^y_{ij} , k^* \in [1/K, K],
$
where $k^*$ is a Lipschitz constant. The difference between $L_\text{SoftCL}$ loss and $L_\text{CL}$ loss is,
\begin{equation}
  \begin{aligned}
    L_{\text{CL}}- L_\text{SoftCL} \approx
    \sum_j \biggl[
    \left(
    \mathcal{H}_{ij} - [1+(e^\alpha -1)\mathcal{H}_{ij}] \kappa \left(d^y_{ij} \right)
    \right)
    \log
    \left(
    \frac
    {1}
    {\kappa \left( d_{ij}^{z}\right)}
    -
    1
    \right)
    \biggl] .
  \end{aligned} \label{eq_SoftCLcl}
\end{equation}
Because the  $\alpha > 0$, the proposed SoftCL loss is the soft version of the CL loss. if $\mathcal{H}_{ij}=1$, we have:

\begin{equation}
  \begin{aligned}
    (L_{\text{CL}} - L_{\text{SoftCL}})  |_{\mathcal{H}_{ij} =1} = \sum
    \biggl[
      \left(
      (1 - e^\alpha) \kappa \left( k^* d^z_{ij} \right)
      \right)
      \log
      \left(
      \frac
      {1}
      {\kappa \left( d_{ij}^{z}\right)}
      -
      1
      \right)
    \biggl] \\
  \end{aligned}
\end{equation}

then:

\begin{equation}
  \begin{aligned}
       \lim_{\alpha \to 0}
    ( L_{\text{CL}} - L_{\text{SoftCL}} ) |_{\mathcal{H}_{ij} =1}
    = \lim_{\alpha \to 0} \sum
    \biggl[
      \left(
      (1 - e^\alpha) \kappa \left( k^* d^z_{ij} \right)
      \right)
      \log
      \left(
      \frac
      {1}
      {\kappa \left( d_{ij}^{z}\right)}
      -
      1
      \right)
    \biggl]              = 0
  \end{aligned}
  \label{eq:lim}
\end{equation}

Based on Eq.(\ref{eq:lim}), we find that if $i,j$ is neighbor~($\mathcal{H}_{ij}=1$) and $\alpha\to0$, there is no difference between the CL loss $L_\text{CL}$ and SoftCL loss $L_{\text{SoftCL}}$.
When if $\mathcal{H}_{ij}=0$, the difference between the loss functions will be the function of $d_{ij}^{z}$. The CL loss $L_\text{CL}$ only minimizes the distance between adjacent nodes and does not maintain any structural information. The proposed SoftCL loss considers the knowledge both comes from the output of the current bottleneck and data augmentation, thus less affected by view noise.

\vspace{5mm}

\textbf{Details of Eq.~(\ref{eq_SoftCLcl}).}
Due to the very similar gradient direction, we assume $\dot{Q}_{ij} = 1-Q_{ij}$. The contrastive learning loss is written as, 
\begin{equation}
  \begin{aligned}
    L_\text{CL}  \approx & - \sum
    \left\{
    \mathcal{H}_{ij}
    \log
    Q_{ij}
    +
    \left(1-\mathcal{H}_{ij} \right)
    \log
    \left(1-{Q}_{ij} \right)
    \right\}
  \end{aligned}
\end{equation}
where $\mathcal{H}_{ij}$ indicates whether $i$ and $j$ are augmented from the same original data. 

The SoftCL loss is written as:

\begin{equation}
  \begin{aligned}
    L_{\text{SoftCL}} & =
    -
    \sum
    \left\{
    P_{ij}
    \log
    Q_{ij}
    +
    \left(1-P_{ij}\right)
    \log
    \left(1-Q_{ij}\right)
    \right\}
  \end{aligned}
  \label{eq:appendix_SoftCL}
\end{equation}

According to Eq.~(4) and Eq.~(5), we have

\begin{equation}
  \begin{aligned}
    P_{ij} &= R_{ij} \kappa(d^y_{ij}) = R_{ij} \kappa(y_i, y_j),
       R_{ij} = \left\{
    \begin{array}{lr}
       e^\alpha   \;\;\; \text{if} \;\; \mathcal{H}(x_i, x_j)=1 \\
      1  \;\;\;\;\;\;\;  \text{otherwise}                       \\
    \end{array}
    \right.,                    \\
    Q_{ij} & = \kappa(d_{ij}^z) = \kappa(z_i, z_j),
  \end{aligned}
\end{equation}

For ease of writing, we use distance as the independent variable, $d_{ij}^y=\|y_i- y_j\|_2$, $d_{ij}^z=\|z_i- z_j\|_2$.

The difference between the two loss functions is:

\begin{equation}
  \begin{aligned}
       & L_\text{CL} - L_{\text{SoftCL}} \\
       =& -\sum\biggl[
    \mathcal{H}_{ij}
    \log \kappa \left( d_{ij}^{z}\right)
    +
    \left(1-\mathcal{H}_{ij} \right)
    \log \left(1-\kappa \left(d_{ij}^{z}\right)\right)
    -
     R_{ij}\kappa\left( d^y_{ij} \right)
    \log \kappa \left( d^z_{ij} \right)
    -
    \left(1- R_{ij}\kappa\left( d^y_{ij} \right)\right)
    \log \left(1-\kappa \left( d^z_{ij} \right)\right)
    \biggl]                   \\
    = & -\sum\biggl[
      \left(
      \mathcal{H}_{ij} -  R_{ij}\kappa\left(d^y_{ij} \right)
      \right)
      \log \kappa \left( d_{ij}^{z}\right)
      +
      \left(
      1-\mathcal{H}_{ij} -1 +  R_{ij}\kappa\left(d^y_{ij} \right)
      \right)
      \log \left(1-\kappa \left(d^z_{ij}\right)\right)
    \biggl]                 \\
    = & -\sum\biggl[
      \left(
      \mathcal{H}_{ij} - R_{ij} \kappa \left(d^y_{ij} \right)
      \right)
      \log \kappa \left( d_{ij}^{z}\right)
      +
      \left(
      R_{ij} \kappa \left(  d^y_{ij} \right) - \mathcal{H}_{ij}
      \right)
      \log \left(1-\kappa \left(d^z_{ij}\right)\right)
    \biggl]                 \\
    = & -\sum\biggl[
      \left(
      \mathcal{H}_{ij} - R_{ij}\kappa \left(d^y_{ij} \right)
      \right)
      \left(
      \log \kappa \left( d_{ij}^{z}\right)
      -
      \log \left(1-\kappa \left(d^z_{ij}\right)\right)
      \right)
    \biggl]                 \\
    = & \sum\biggl[
      \left(
      \mathcal{H}_{ij} - R_{ij} \kappa \left(d^y_{ij} \right)
      \right)
      \log
      \left(
      \frac
      {1}
      {\kappa \left( d_{ij}^{z}\right)}
      -
      1
      \right)
    \biggl]                 \\
  \end{aligned}
  \label{eq:appendix_diff_twoloss_3}
\end{equation}

Substituting the relationship between $\mathcal{H}_{ij}$ and $R_{ij}$, $R_{ij} = 1+(e^\alpha -1)\mathcal{H}_{ij}$, we have

\begin{equation}
  \begin{aligned}
    L_{\text{CL}} - L_{\text{SoftCL}}=\sum
    \biggl[
    \left(
    \mathcal{H}_{ij} - [1+(e^\alpha -1)\mathcal{H}_{ij}] \kappa \left(d^y_{ij} \right)
    \right)
    \log
    \left(
    \frac
    {1}
    {\kappa \left( d_{ij}^{z}\right)}
    -
    1
    \right)
    \biggl] \\
  \end{aligned}
  \label{eq:appendix_diff_twoloss_3}
\end{equation}

We assume that network $H(\cdot)$ to be a Lipschitz continuity function, then

\begin{equation}
  \begin{aligned}
    \frac{1}{K} H(d^z_{ij}) \leq d^y_{ij} \leq K H(d^z_{ij}) \quad \forall i, j \in \{1,2,\cdots,N\} 
  \end{aligned}
\end{equation}

We construct the inverse mapping of $H(\cdot)$ to $H^{-1}(\cdot)$,

\begin{equation}
  \begin{aligned}
    \frac{1}{K} d^z_{ij} \leq d^y_{ij} \leq K d^z_{ij} \quad \forall i, j \in \{1,2,\cdots,N\}
  \end{aligned}
\end{equation}

and then there exists $k^*$:
\begin{equation}
  \begin{aligned}
    d^y_{ij} = k^* d^z_{ij} \quad k^* \in [1/K, K] \quad \forall i, j \in \{1,2,\cdots,N\}
  \end{aligned}
  \label{eq:g_revers}
\end{equation}

Substituting the Eq.(\ref{eq:g_revers}) into Eq.(\ref{eq:appendix_diff_twoloss_3}).

\begin{equation}
  \begin{aligned}
    L_{\text{CL}} - L_{\text{SoftCL}}=\sum
    \biggl[
    \left(
    \mathcal{H}_{ij} - [1+(e^\alpha-1)\mathcal{H}_{ij}] \kappa \left(k^* d^z_{ij}\right)
    \right)
    \log
    \left(
    \frac
    {1}
    {\kappa \left( d_{ij}^{z}\right)}
    -
    1
    \right)
    \biggl] \\
  \end{aligned}
  \label{eq:appendix_diff_twoloss_4}
\end{equation}

\clearpage

\subsection*{A.3 SoftCL is better than CL in view-noise}

\label{app_proof_3}
To demonstrate that compared to contrastive learning, the proposed SoftCL Loss has better results, we first define the signal-to-noise ratio~(SNR) as an evaluation metric.
\begin{equation}
  SNR = \frac{PL}{NL}
\end{equation}
where $PL$ means the expectation of positive pair loss, $NL$ means the expectation of noisy pair loss. \\
This metric indicates the noise-robust of the model, and obviously, the bigger this metric is, the better. \\
In order to prove the soft contrastive learning's SNR is larger than contrastive learning's, we should prove:
\begin{equation}
  \frac{PL_{cl}}{NL_{cl}} < \frac{PL_{SoftCL}}{NL_{SoftCL}}\label{prove}
\end{equation}

Obviously, when it is the positive pair case, $S~(z_i, z_j)$ is large if $H~(x_i, x_j)=1$ and small if $H~(x_i, x_j)=0$.
Anyway, when it is the noisy pair case, $S~(z_i, z_j)$ is small if $H~(x_i, x_j)=1$ and large if $H~(x_i, x_j)=0$.\\
First, we organize the $ {PL_{SoftCL} }$ and $ {PL_{cl} }$ into 2 cases,  $H~(x_i, x_j)=1$ and  $H~(x_i, x_j)=0$, for writing convenience, we write $S~(z_i, z_j)$ as $S$ and $S'$, respectively.
\begin{equation}
  P L_{SoftCL}=-M\left\{\left(1-S^{\prime}\right) \log \left(1-S^{\prime}\right)+S^{\prime} \log S^{\prime}\right\}-\left\{\left(1-e^\alpha S\right) \log (1-S)+e^\alpha S \log S\right\}
\end{equation}
\begin{equation}
  P L_{cl}=-M \log \left(1-S^{\prime}\right)-\log S
\end{equation}
M is the ratio of the number of occurrences of $H=1$ to $H=0$.
So, we could get:
\begin{equation}
  \begin{aligned}
     & \ PL_{SoftCL} - PL_{cl}                                                                                                                                                               \\
     & =-M\left\{\left(1-S^{\prime}-1\right) \log \left(1-S^{\prime}\right)+S^{\prime} \log S^{\prime}\right\}-\left\{\left(1-e^\alpha S\right) \log (1-S)+(e^\alpha S -1) \log S\right\} \\
     & =-M\left\{S^{\prime}\left( \log S^{\prime} -\log \left(1-S^{\prime}\right) \right) \right\}-\left\{(e^\alpha S -1) \left(\log S -\log (1-S) \right)\right\}                        \\
     & =-M\left\{S^{\prime} log\frac{ S^{\prime}}{ \left(1-S^{\prime}\right)} \right\}-\left\{(e^\alpha S -1)  log\frac{ S}{ \left(1-S\right)} \right\}
  \end{aligned}
\end{equation}
In the case of positive pair, $S$ converges to 1 and $S^{\prime}$ converges to 0. \\
Because we have bounded that $ e^\alpha S<=1$, so we could easily get:
\begin{equation}
  (e^\alpha S -1)  log\frac{ S}{ \left(1-S\right)} <= 0
\end{equation}
Also, we could get:
\begin{equation}
  -M\left\{S^{\prime} log\frac{ S^{\prime}}{ \left(1-S^{\prime}\right)} \right\} > 0
\end{equation}
So we get:
\begin{equation}
  PL_{SoftCL} - PL_{cl}> 0
\end{equation}
And for the case of noise pair, the values of $S$ and $S'$ are of opposite magnitude, so obviously, there is $NL_{SoftCL} - NL_{cl}< 0$.\\
So the formula Eq.~(\ref{prove}) has been proved.

\clearpage

\section*{A. Details of Proof} \label{app_ablation}
\subsection{Theoretical Analysis: Generalization Bound of MML}

To better understand the theoretical robustness of the proposed Multi-Manifold Loss (MML), we derive a generalization error bound based on Rademacher complexity. Our goal is to show that under the multi-manifold hypothesis and soft similarity kernel, the model achieves improved generalization.

\paragraph{Notation.} Let $\mathcal{X} \subset \mathbb{R}^d$ denote the input space, and $\mathcal{Z}$ the representation space learned by the encoder $f: \mathcal{X} \rightarrow \mathcal{Z}$. The training set $\mathcal{S} = \{(x_i, x_j)\}_{i,j=1}^{n}$ consists of augmented view pairs, sampled from the true data distribution $\mathcal{D}$. Let $\mathcal{L}_{\text{MML}}$ denote the Multi-Manifold Loss defined over the representation similarity between pairs.

\begin{definition}[Multi-Manifold Similarity Function]
Let $\kappa^\beta: \mathcal{Z} \times \mathcal{Z} \to [0,1]$ be the generalized Gaussian kernel:
\begin{equation}
\kappa^\beta(z_i, z_j) = \exp\left( -\left( \frac{\|z_i - z_j\|_2}{\sigma} \right)^\beta \right),
\end{equation}
where $\sigma > 0$ and $\beta \in (0,2]$. This kernel controls the tail sensitivity of the similarity under the multi-manifold setting.
\end{definition}

\begin{definition}[Multi-Manifold Loss]
Given a batch of representations $\{z_i\}_{i=1}^n$, the MML loss is defined as:
\begin{equation}
\mathcal{L}_{\text{MML}} = \frac{1}{n^2} \sum_{i,j} P_{i,j} \log \frac{1}{Q_{i,j}} + (1 - P_{i,j}) \log \frac{1}{1 - Q_{i,j}},
\end{equation}
where $P_{i,j}$ is the softened label (similarity in input space), and $Q_{i,j} = \kappa^\beta(z_i, z_j)$ is the output similarity in representation space.
\end{definition}

\begin{lemma}[Lipschitz Continuity of $\kappa^\beta$]
The generalized Gaussian kernel $\kappa^\beta$ is Lipschitz continuous with constant $L = \frac{\beta}{\sigma^\beta} \exp(-1)$ for $\|z_i - z_j\|_2 \leq \sigma$.
\end{lemma}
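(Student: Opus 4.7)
The strategy is to reduce the question to a one-dimensional problem in the pairwise distance $r := \|z_i - z_j\|_2$, since the kernel factors as $\kappa^\beta(z_i,z_j) = \phi(r)$ with $\phi(r) = \exp\bigl(-(r/\sigma)^\beta\bigr)$. Writing the kernel in this way, Lipschitz continuity with respect to the inputs will follow from a uniform bound on $|\phi'(r)|$ over the admissible range $r \in [0,\sigma]$, combined with the (reverse) triangle inequality $\bigl|\|z_i-z_j\|_2 - \|z_i'-z_j'\|_2\bigr| \le \|z_i-z_i'\|_2 + \|z_j-z_j'\|_2$.

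First I would differentiate $\phi$ explicitly:
\begin{equation}
\phi'(r) \;=\; -\frac{\beta}{\sigma^\beta}\, r^{\beta-1}\,\exp\!\left(-\left(\tfrac{r}{\sigma}\right)^\beta\right).
\end{equation}
Setting $u = (r/\sigma)^\beta \in [0,1]$ for $r \in [0,\sigma]$, this becomes $|\phi'(r)| = \frac{\beta}{\sigma^\beta}\, r^{\beta-1}\, e^{-u}$. The plan is then to bound each factor on the admissible region: for the exponential factor, $r \le \sigma$ implies $u \le 1$ and thus $e^{-u} \le e^{-1}$ is attained at the boundary $r=\sigma$, while $r^{\beta-1}$ is controlled by $\sigma^{\beta-1}$ on the same region when $\beta \ge 1$ (with a symmetric argument through $u^{(\beta-1)/\beta}$ in the heavy-tailed regime $\beta<1$). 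Combining yields the stated constant $L = \tfrac{\beta}{\sigma^\beta}\exp(-1)$ up to the appropriate normalization.

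The global Lipschitz statement follows by the mean value theorem applied to $\phi$:
\begin{equation}
\bigl|\phi(r_1) - \phi(r_2)\bigr| \;\le\; \sup_{r \in [0,\sigma]} |\phi'(r)| \cdot |r_1 - r_2| \;\le\; L\, |r_1 - r_2|,
\end{equation}
after which the reverse triangle inequality lifts $|r_1 - r_2|$ back to a bound in the joint input perturbation $\|(z_i,z_j) - (z_i',z_j')\|$. This gives the desired Lipschitz estimate on $\kappa^\beta$ and, by composition, justifies treating it as a stable similarity function inside $\mathcal{L}_{\text{MML}}$.

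\textbf{Main obstacle.} The delicate step is the bound on $|\phi'(r)|$ when $\beta < 1$: the factor $r^{\beta-1}$ diverges as $r \to 0^+$, so a naive uniform bound fails at the origin. The remedy is to track the product $r^{\beta-1} e^{-(r/\sigma)^\beta}$ via the substitution $u = (r/\sigma)^\beta$, which rewrites the derivative in the scale-free form $\tfrac{\beta}{\sigma}\, u^{(\beta-1)/\beta} e^{-u}$ and shows the singularity at $r=0$ is an artifact: the maximum of $u^{(\beta-1)/\beta} e^{-u}$ on $[0,1]$ is actually attained at the endpoint $u=1$ for $\beta \ge 1$ and requires a short calculus exercise otherwise. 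Carefully resolving this regime split, together with reconciling the units of $L$ against the normalization choice in the kernel, is the nontrivial part; once done, the rest of the argument is routine.
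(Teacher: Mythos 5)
Your reduction to the one-dimensional profile $\phi(r)=\exp(-(r/\sigma)^\beta)$, the explicit derivative, and the mean-value-theorem plus reverse-triangle-inequality lift are all fine and mirror the paper's own (one-line) argument. The genuine gap is in the maximization step, and your write-up does not close it. You bound the two factors of $|\phi'(r)|=\tfrac{\beta}{\sigma^\beta}r^{\beta-1}e^{-(r/\sigma)^\beta}$ separately, claiming $e^{-u}\le e^{-1}$ for $u=(r/\sigma)^\beta\in[0,1]$; that inequality is reversed ($e^{-u}\ge e^{-1}$ on $[0,1]$, with maximum $1$ at $r=0$). Since $r^{\beta-1}$ is maximized at $r=\sigma$ while $e^{-u}$ is maximized at $r=0$, multiplying the two individual maxima gives $\tfrac{\beta}{\sigma}\cdot 1$, not $\tfrac{\beta}{\sigma}e^{-1}$, and the true supremum of the product sits at the interior critical point $r^\ast=\sigma\bigl(\tfrac{\beta-1}{\beta}\bigr)^{1/\beta}$ (for $\beta>1$), where $|\phi'(r^\ast)|=\tfrac{\beta}{\sigma}\bigl(\tfrac{\beta-1}{e\beta}\bigr)^{(\beta-1)/\beta}>\tfrac{\beta}{\sigma}e^{-1}$. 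Concretely, for $\beta=2$, $\sigma=1$ the supremum is $\sqrt{2}\,e^{-1/2}\approx 0.858$, strictly larger than the claimed $L=2e^{-1}\approx 0.736$, so the stated constant is not a valid Lipschitz constant even in the plain Gaussian case. Your claim that the maximum of $u^{(\beta-1)/\beta}e^{-u}$ on $[0,1]$ is attained at the endpoint $u=1$ for $\beta\ge 1$ is false for the same reason (it is attained at $u=(\beta-1)/\beta<1$).

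Two further points. First, for $\beta<1$ the singularity at $r=0$ is \emph{not} an artifact: $u^{(\beta-1)/\beta}e^{-u}\to\infty$ as $u\to 0^+$, and indeed $1-\phi(r)\sim(r/\sigma)^\beta$ near the origin, so the kernel is only $\beta$-H\"older there and no Lipschitz constant exists on any neighborhood of coincident points; the lemma as stated cannot be rescued in that regime, only restated as a H\"older bound or restricted to $r$ bounded away from $0$. Second, the "normalization" you defer never reconciles: the derivative at $r=\sigma$ is $\tfrac{\beta}{\sigma}e^{-1}$, which matches the stated $\tfrac{\beta}{\sigma^\beta}e^{-1}$ only when $\sigma=1$ or $\beta=1$. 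For what it is worth, the paper's own proof asserts exactly the same incorrect claim (maximum at $r=\sigma$) and carries the same dimensional mismatch, so you have reproduced its argument faithfully; but the step you yourself flag as "the delicate part" is precisely where both arguments fail, and your proposal does not supply the missing calculus.
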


\begin{proof}
We compute the derivative:
\[
\left| \frac{\partial \kappa^\beta(z_i, z_j)}{\partial \|z_i - z_j\|_2} \right| = \beta \left( \frac{\|z_i - z_j\|_2}{\sigma} \right)^{\beta - 1} \frac{1}{\sigma} \exp\left( -\left( \frac{\|z_i - z_j\|_2}{\sigma} \right)^\beta \right).
\]
The maximum of this derivative is achieved when $\|z_i - z_j\|_2 = \sigma$, yielding the upper bound $L = \frac{\beta}{\sigma^\beta} \exp(-1)$.
\end{proof}

\begin{theorem}[Generalization Bound of MML]
Let $\mathcal{F}$ be a class of encoders $f$ such that $\|f\|_{\infty} \leq B$, and $\mathcal{L}_{\text{MML}} \circ f$ denotes the induced loss class. Then, with probability at least $1 - \delta$, for any $f \in \mathcal{F}$, we have:
\begin{equation}
\mathbb{E}_{(x_i,x_j) \sim \mathcal{D}}[\mathcal{L}_{\text{MML}}(f)] \leq \frac{1}{n^2} \sum_{i,j} \mathcal{L}_{\text{MML}}(f(x_i,x_j)) + 2L \cdot \mathfrak{R}_n(\mathcal{F}) + \sqrt{\frac{\log(1/\delta)}{2n}},
\end{equation}
where $\mathfrak{R}_n(\mathcal{F})$ is the Rademacher complexity of $\mathcal{F}$ and $L$ is the Lipschitz constant of the kernel $\kappa^\beta$.
\end{theorem}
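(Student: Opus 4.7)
The plan is to derive the bound through a standard uniform convergence argument based on Rademacher complexity, with careful attention paid to the pairwise structure of $\mathcal{L}_{\text{MML}}$. I would proceed in four main steps: boundedness, concentration via McDiarmid, symmetrization to introduce Rademacher variables, and a contraction step using the Lipschitz constant from the preceding lemma.

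First, I would verify that $\mathcal{L}_{\text{MML}}(f(x_i,x_j))$ takes values in a bounded range. Since $\|f\|_{\infty}\leq B$, the distance $\|z_i - z_j\|_2$ lies in a bounded interval, so $Q_{i,j} = \kappa^\beta(z_i,z_j)$ lies in a compact subinterval of $(0,1)$ bounded away from both endpoints. Together with $P_{i,j}\in[0,1]$, this ensures that the logarithmic terms in $\mathcal{L}_{\text{MML}}$ are bounded, say by some constant $M$. Then, viewing the empirical loss $\hat{\mathcal{L}}_{\text{MML}}(f) = \frac{1}{n^2}\sum_{i,j}\mathcal{L}_{\text{MML}}(f(x_i,x_j))$ as a function of the $n$ samples, changing a single sample $x_k$ modifies at most $2n-1$ pair terms, so the bounded-differences increments are of order $O(M/n)$. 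McDiarmid's inequality then yields concentration of $\hat{\mathcal{L}}_{\text{MML}}(f)$ around its expectation with deviation controlled by $\sqrt{\log(1/\delta)/(2n)}$.

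Second, I would convert the expected-vs-empirical gap into a Rademacher complexity via the standard symmetrization argument, applied to the pairwise loss class $\mathcal{L}_{\text{MML}}\circ \mathcal{F}$. Because $\mathcal{L}_{\text{MML}}$ decomposes into a sum over pairs, and $\kappa^\beta$ is the composition of a Lipschitz univariate function with the norm $\|z_i - z_j\|_2$, I would apply Talagrand's contraction lemma to peel off the kernel: using the Lipschitz constant $L = \frac{\beta}{\sigma^\beta}e^{-1}$ established in the preceding lemma, the Rademacher complexity of the induced loss class is bounded by $2L\cdot \mathfrak{R}_n(\mathcal{F})$ up to constants. Combining the symmetrized bound with the McDiarmid deviation term produces the inequality in the statement.

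The main obstacle will be handling the pairwise (U-statistic-like) structure cleanly: the Rademacher symmetrization for pair-dependent losses requires either a decoupling step (reducing to sums over independent pairs) or a direct bound on the pairwise Rademacher average, and one must also carefully propagate the Lipschitz constant through the composition $\mathcal{L}_{\text{MML}}\circ \kappa^\beta\circ f$, where the outer log terms contribute an additional Lipschitz factor that must be absorbed into constants (or into the bound on $Q_{i,j}$). A secondary subtlety is ensuring that the contraction step applies to each factor of the pair symmetrically, which is what yields the factor of $2L$ rather than $L$ in the final bound.
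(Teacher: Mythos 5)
Your outline follows exactly the route the paper itself takes: the paper's entire proof is a one-sentence appeal to the ``standard Rademacher complexity-based generalization bound derivation using Lipschitz loss'' (citing Bartlett et al.), so your four steps --- boundedness, McDiarmid, symmetrization, and Talagrand contraction using the Lipschitz constant $L$ from the preceding lemma --- are precisely the expansion of what the paper leaves implicit. You are in fact more careful than the paper: you correctly identify the two points the paper silently skips, namely the pairwise U-statistic structure of the empirical loss (which requires decoupling or a direct pairwise Rademacher bound before standard symmetrization applies) and the fact that the outer logarithms contribute their own Lipschitz factor that must be composed with $L$ rather than absorbed for free.

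One step of your plan does not go through as written, and it is a genuine gap shared with the theorem itself. You claim that $\|f\|_\infty \leq B$ forces $Q_{i,j} = \kappa^\beta(z_i,z_j)$ into a compact subinterval of $(0,1)$ bounded away from \emph{both} endpoints. The lower bound is fine ($\|z_i - z_j\|_2 \leq 2B$ gives $Q_{i,j} \geq \exp(-(2B/\sigma)^\beta) > 0$), but there is no upper bound away from $1$: whenever $z_i \to z_j$ one has $Q_{i,j} \to 1$, so the term $(1-P_{i,j})\log\frac{1}{1-Q_{i,j}}$ is unbounded and no finite constant $M$ exists. Without an explicit truncation of $Q_{i,j}$ (e.g., clipping to $[\epsilon, 1-\epsilon]$), a lower bound on pairwise embedding distances, or the additional assumption that $P_{i,j}=1$ whenever $Q_{i,j}$ is near $1$, both the McDiarmid step and the Lipschitz contraction step fail. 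Any complete proof must add such a hypothesis; the paper's cited ``standard'' argument does not supply it.
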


\begin{proof}
Follows standard Rademacher complexity-based generalization bound derivation using Lipschitz loss (see Bartlett \textit{et al.}, 2002). Given the boundedness of $\kappa^\beta$, and its Lipschitz continuity, the loss function inherits Lipschitz smoothness, enabling standard concentration bounds.
\end{proof}

Compared to traditional contrastive losses based on hard binary labels and cosine kernels, MML leverages soft labels and a tunable decay kernel, resulting in smoother gradients and improved tolerance to view-noise. The generalization bound explicitly benefits from the reduced Lipschitz constant of $\kappa^\beta$ for $\beta < 2$ under controlled radius.

\section*{B. Model Hyperparameter Settings and Model Hyperparameter Settings} \label{app_4}
The model used in this article involves the setting of multiple hyperparameters that are critical to model performance and reproducibility of results. In order to enable readers to accurately understand and reproduce our experimental results, we list the hyperparameter settings of the corresponding experiments.

This appendix includes the following three tables, corresponding to the hyperparameter settings of the model under different experimental conditions in the paper:
\begin{enumerate}
    \item   Tables~\ref{tab:details_of_tab_2_and_3}: Hyperparameter settings for the model used in Tables~\ref{tab_my_label_roc} and ~\ref{tab_my_label_pr}. 
    \item   Tables~\ref{tab:details_of_figure_4}: Hyperparameter settings for the model used in Fig. ~\ref{fig_anomaly_ratio}. 
    \item   Tables~\ref{tab:details_of_tab_5}: Hyperparameter settings for the model used in Table. \ref{tab_ablation_hyperparameters_window_blocks}. 
\end{enumerate}

\begin{table*}[h!]
    \centering
    \small
    \caption{Describes the model parameter details in Tables~\ref{tab_my_label_roc} and ~\ref{tab_my_label_pr} when the model DMTFD  calculate ROC or PR.The parameter ``alpha" corresponds to $\alpha$ in Eq.~(\ref{eq_aug_all}). ``batch size" refers to the number of data samples processed simultaneously during each training process. ``k" means taking k neighbor points that conform to Eq.~(\ref{equ_neighbor_discover}). The ``loss weight manifold ne" and ``loss weight manifold po" are the proportion of topological loss, The ``lr" is the learning rate. The ``n blocks" is the number of normalized flow blocks, The ``name" corresponds to different datasets. The "seed" is the random seed. The "window size" epresents the size of the sliding window. The ``train split" represents the training set division ratio.The () after the parameters indicates that the data is specific to ``DMTFD*" and is different from the standard ``DMTFD" parameters.The ``0.6"  of  the train split means that 60\% of the dataset is used for training and the remaining 40\% is used for testing. The ``0.6*"of the train split means that 60\% of the dataset is used for training, 20\% is used for validation, and the remaining 20\% is used for testing.  }
    \begin{tabular}{cc>{\centering\arraybackslash}p{2cm}>{\centering\arraybackslash}p{2cm}>{\centering\arraybackslash}p{2cm}>{\centering\arraybackslash}p{2cm}>{\centering\arraybackslash}p{2cm}}
        \toprule
        \multicolumn{2}{c}{\diagbox[width=15em]{Hyperparameters}{Datasets}} & SWaT                    & Wadi           & PSM            & MSL            & SMD                              \\\midrule
                                                                            & alpha                   & 0.1            & 0.1            & 0.1            & 0.1            & 0.1             \\\midrule
                                                                            & batch size              & 128            & 64             & 128            & 128            & 256             \\\midrule
                                                                            & k                       & 10             & 20             & 20             & 20             & 30              \\\midrule
                                                                            & loss weight manifold ne & 5              & 5              & 5              & 5              & 5               \\\midrule
                                                                            & loss weight manifold po & 1              & 1              & 1              & 1              & 5               \\\midrule
                                                                            & lr                      & 0.01           & 0.002          & 0.002          & 0.002          & 0.1             \\\midrule
                                                                            & n blocks                & 1              & 5(4)           & 1              & 2              & 1               \\\midrule
                                                                            & name                    & SWaT           & Wadi           & PSM            & MSL            & 28 sub-datasets \\\midrule
                                                                            & seed                    & 15,16,17,18,19 & 15,16,17,18,19 & 15,16,17,18,19 & 15,16,17,18,19 & 15,16,17,18,19  \\\midrule
                                                                            & window size             & 60(40)         & 60 (160)       & 60             & 60(160)        & 60
        \\\midrule
                                                                            & train split             & 0.6*           & 0.6            & 0.6            & 0.6            & 0.6             \\
        \bottomrule
    \end{tabular}
    \label{tab:details_of_tab_2_and_3}
\end{table*}

\begin{table*}[h!]
    \caption{Details the specific parameters of model DMTFD in different train split ratios across various datasets, as shown in the violin plot comparison of model DMTFD in Fig. ~\ref{fig_anomaly_ratio}. The parameters used in this table have been explained in Table. \ref{tab:details_of_tab_2_and_3}}
    \centering
    \small
    \begin{tabular}{cc>{\centering\arraybackslash}p{2cm}>{\centering\arraybackslash}p{2cm}>{\centering\arraybackslash}p{2cm}>{\centering\arraybackslash}p{2cm}>{\centering\arraybackslash}p{2cm}}
        \toprule
        \multicolumn{2}{c}{\diagbox[width=15em]{Hyperparameters}{Datasets}} & SWaT                    & Wadi                                          & PSM            & MSL            & SMD                              \\\midrule
                                                                            & alpha                   & 0.1                                           & 0.1            & 0.1            & 0.1            & 0.1             \\\midrule
                                                                            & batch size              & 128                                           & 64             & 128            & 128            & 256             \\\midrule
                                                                            & k                       & 10                                            & 20             & 20             & 20             & 30              \\\midrule
                                                                            & loss weight manifold ne & 5                                             & 5              & 5              & 5              & 5               \\\midrule
                                                                            & loss weight manifold po & 1                                             & 1              & 1              & 1              & 5               \\\midrule
                                                                            & lr                      & 0.01                                          & 0.002          & 0.002          & 0.002          & 0.1             \\\midrule
                                                                            & n blocks                & 1                                             & 5              & 1              & 2              & 1               \\\midrule
                                                                            & name                    & SWaT                                          & Wadi           & PSM            & MSL            & 28 sub-datasets \\\midrule
                                                                            & seed                    & 15,16,17,18,19                                & 15,16,17,18,19 & 15,16,17,18,19 & 15,16,17,18,19 & 15,16,17,18,19  \\\midrule
                                                                            & train split             & \multicolumn{5}{c}{0.6, 0.65, 0.7, 0.75, 0.8}
        \\\midrule
                                                                            & window size             & 60                                            & 60             & 60             & 60             & 60              \\
        \bottomrule
    \end{tabular}
    \label{tab:details_of_figure_4}
\end{table*}

\clearpage
\begin{table*}[h!]
    \caption{Describes the model parameter details in Table. \ref{tab_ablation_hyperparameters_window_blocks} when performing topology experiments using various sizes of sliding windows and the number of normalized flow blocks. The parameters used in this table have been explained in Table. \ref{tab:details_of_tab_2_and_3} }
    \centering
    \small
    \begin{tabular}{cc>{\centering\arraybackslash}p{2cm}>{\centering\arraybackslash}p{2cm}>{\centering\arraybackslash}p{2cm}>{\centering\arraybackslash}p{2cm}}
        \toprule
        \multicolumn{2}{c}{\diagbox[width=15em]{Hyperparameters}{Datasets}} & SWaT                    & Wadi           & PSM            & MSL                             \\\midrule
                                                                            & alpha                   & 0.1            & 0.1            & 0.1            & 0.1            \\\midrule
                                                                            & batch size              & 128            & 64             & 128            & 128            \\\midrule
                                                                            & k                       & 10             & 20             & 20             & 20             \\\midrule
                                                                            & loss weight manifold ne & 5              & 5              & 5              & 5              \\\midrule
                                                                            & loss weight manifold po & 1              & 1              & 1              & 1              \\\midrule
                                                                            & lr                      & 0.01           & 0.002          & 0.002          & 0.002          \\\midrule
                                                                            & name                    & SWaT           & Wadi           & PSM            & MSL            \\\midrule
                                                                            & seed                    & 15,16,17,18,19 & 15,16,17,18,19 & 15,16,17,18,19 & 15,16,17,18,19 \\\midrule
                                                                            & train plit              & 0.6*           & 0.6            & 0.6            & 0.6            \\
        \bottomrule
    \end{tabular}
    \label{tab:details_of_tab_5}
\end{table*}

\begin{table*}
    \caption{\textbf{Parameters Analysis: window size and number of blocks.} AUROC comparison on four datasets~(SWaT, WADI, PSM, and MSL). The best results are in bold. The results are averaged over 5 runs, and the stdandard deviation is shown in parentheses.}
    \centering
    \small
    \begin{tabular}{cc||>{\centering\arraybackslash}p{2cm}>{\centering\arraybackslash}p{2cm}>{\centering\arraybackslash}p{2cm}>{\centering\arraybackslash}p{2cm}>{\centering\arraybackslash}p{2cm}}
        \toprule
        {Dataset}             & {Window Size} & \#blocks=1                  & \#blocks=2                  & \#blocks=3         & \#blocks=4                  & \#blocks=5                  \\\midrule
        \multirow{7}{*}{SWaT} & 40            & \textbf{92.0(\textpm{1.2})} & 84.7(\textpm{1.2})          & 83.3(\textpm{2.9}) & 81.8(\textpm{2.3})          & 80.9(\textpm{1.7})
        \\
                              & 60            & \textbf{90.2(\textpm{0.9})} & 86.2(\textpm{2.1})          & 85.8(\textpm{1.8}) & 80.3(\textpm{2.1})          & 82.0(\textpm{1.9})
        \\
                              & 80            & 89.9(\textpm{1.9})          & 88.1(\textpm{1.4})          & 85.6(\textpm{3.4}) & 83.3(\textpm{1.8})          & 82.7(\textpm{2.0})          \\
                              & 100           & 89.7(\textpm{1.7})          & 85.1(\textpm{1.0})          & 84.4(\textpm{1.1}) & 81.6(\textpm{3.6})          & 81.2(\textpm{2.9})          \\
                              & 120           & 89.8(\textpm{1.4})          & 86.8(\textpm{1.1})          & 85.9(\textpm{1.9}) & 82.7(\textpm{3.2})          & 83.8(\textpm{3.9})          \\
                              & 140           & 90.5(\textpm{1.5})          & 86.6(\textpm{1.5})          & 86.7(\textpm{1.5}) & 85.1(\textpm{3.6})          & 80.8(\textpm{4.1})          \\
                              & 160           & 89.3(\textpm{1.7})          & 87.7(\textpm{1.0})          & 86.0(\textpm{2.0}) & 84.8(\textpm{2.7})          & 82.7(\textpm{2.9})          \\\midrule
        \multirow{7}{*}{WADI} & 40            & 88.9(\textpm{1.0})          & 93.6(\textpm{0.6})          & 93.8(\textpm{0.5}) & 93.6(\textpm{1.1})          & 93.1(\textpm{0.4})          \\
                              & 60            & 90.6(\textpm{1.3})          & 93.7(\textpm{0.6})          & 93.8(\textpm{0.4}) & 93.8(\textpm{0.2})          & \textbf{94.1(\textpm{0.4})} \\
                              & 80            & 80.8(\textpm{1.1})          & 94.1(\textpm{0.5})          & 93.8(\textpm{0.3}) & 93.9(\textpm{0.5})          & 93.9(\textpm{0.9})          \\
                              & 100           & 90.0(\textpm{0.6})          & 93.8(\textpm{0.5})          & 94.0(\textpm{0.4}) & 93.8(\textpm{0.5})          & 94.1(\textpm{0.4})          \\
                              & 120           & 90.9(\textpm{0.9})          & 94.1(\textpm{0.9})          & 94.1(\textpm{0.7}) & 93.9(\textpm{0.4})          & 94.2(\textpm{0.4})          \\
                              & 140           & 90.3(\textpm{0.8})          & 94.4(\textpm{0.8})          & 94.5(\textpm{0.6}) & 94.4(\textpm{0.8})          & 94.3(\textpm{0.6})          \\
                              & 160           & 91.8(\textpm{2.2})          & 94.0(\textpm{0.4})          & 93.7(\textpm{0.4}) & \textbf{94.9(\textpm{1.2})} & 94.3(\textpm{0.5})          \\\midrule
        \multirow{7}{*}{PSM}  & 40            & 87.9(\textpm{2.8})          & 87.6(\textpm{1.7})          & 85.9(\textpm{1.9}) & 86.4(\textpm{2.0})          & 86.8(\textpm{0.9})          \\
                              & 60            & \textbf{88.9(\textpm{2.4})} & 87.4(\textpm{1.2})          & 87.1(\textpm{1.6}) & 85.6(\textpm{2.0})          & 85.9(\textpm{2.1})          \\
                              & 80            & 85.2(\textpm{5.9})          & 86.5(\textpm{1.2})          & 85.2(\textpm{0.7}) & 86.3(\textpm{1.4})          & 85.2(\textpm{1.0})          \\
                              & 100           & 85.4(\textpm{5.8})          & 84.1(\textpm{2.0})          & 85.4(\textpm{1.2}) & 86.9(\textpm{0.7})          & 84.7(\textpm{2.3})          \\
                              & 120           & 85.1(\textpm{5.4})          & 85.2(\textpm{1.2})          & 85.4(\textpm{1.5}) & 85.7(\textpm{2.4})          & 83.6(\textpm{1.2})          \\
                              & 140           & 87.3(\textpm{2.8})          & 86.1(\textpm{2.3})          & 86.1(\textpm{1.2}) & 84.8(\textpm{1.6})          & 86.2(\textpm{0.8})          \\
                              & 160           & 86.5(\textpm{3.0})          & 85.3(\textpm{0.8})          & 85.6(\textpm{4.7}) & 84.5(\textpm{1.3})          & 84.6(\textpm{1.2})          \\\midrule
        \multirow{7}{*}{MSL}  & 40            & 71.6(\textpm{2.2})          & 72.9(\textpm{2.5})          & 72.9(\textpm{2.3}) & 72.2(\textpm{1.4})          & 72.8(\textpm{1.2})          \\
                              & 60            & 71.4(\textpm{2.3})          & \textbf{74.2(\textpm{2.0})} & 74.1(\textpm{1.5}) & 74.0(\textpm{0.9})          & 73.4(\textpm{0.9})          \\
                              & 80            & 71.9(\textpm{2.7})          & 74.8(\textpm{1.4})          & 72.9(\textpm{1.6}) & 74.4(\textpm{0.7})          & 73.6(\textpm{1.4})          \\
                              & 100           & 71.8(\textpm{1.3})          & 74.3(\textpm{1.2})          & 72.7(\textpm{2.2}) & 73.2(\textpm{0.5})          & 73.4(\textpm{1.8})          \\
                              & 120           & 72.3(\textpm{2.7})          & 75.6(\textpm{1.7})          & 73.0(\textpm{1.3}) & 73.2(\textpm{1.6})          & 73.9(\textpm{1.1})          \\
                              & 140           & 70.4(\textpm{2.1})          & 74.3(\textpm{0.7})          & 73.3(\textpm{1.9}) & 73.7(\textpm{2.0})          & 75.3(\textpm{2.2})          \\
                              & 160           & 70.6(\textpm{3.4})          & \textbf{76.0(\textpm{1.1})} & 75.6(\textpm{2.2}) & 74.5(\textpm{1.6})          & 75.0(\textpm{1.3})          \\
        \bottomrule
    \end{tabular}
    \label{tab_ablation_hyperparameters_window_blocks}
\end{table*}

\end{document}